\newcommand{\myparagraph}[1]{\needspace{1\baselineskip}\medskip\noindent {\bf #1}}
\definecolor{penndarkestblue}{cmyk}{1,0.74,0,0.77}
\definecolor{penndarkerblue}{cmyk}{1,0.74,0,0.70}
\definecolor{pennblue}{cmyk}{0.99,0.66,0,0.57} 
\definecolor{pennlighterblue}{cmyk}{0.98,0.44,0,0.35}
\definecolor{pennlightestblue}{cmyk}{0.38,0.17,0,0.17} 
\definecolor{penndarkestred}{cmyk}{0,1,0.89,0.66}
\definecolor{penndarkerred}{cmyk}{0,1,0.88,0.55}
\definecolor{pennred}{cmyk}{0,1,0.83,0.42} 
\definecolor{pennlighterred}{cmyk}{0,1,0.6,0.24}
\definecolor{pennlightestred}{cmyk}{0,0.43,0.26,0.12} 
\definecolor{penndarkestgreen}{cmyk}{1,0,1,0.68}
\definecolor{penndarkergreen}{cmyk}{1,0,1,0.57}
\definecolor{penngreen}{cmyk}{1,0,1,0.44} 
\definecolor{pennlightergreen}{cmyk}{1,0,1,0.25}
\definecolor{pennlightestgreen}{cmyk}{0.43,0,0.43,0.13}
\definecolor{penndarkestorange}{cmyk}{0,0.65,1,0.49}
\definecolor{penndarkerorange}{cmyk}{0,0.65,1,0.33}
\definecolor{pennorange}{cmyk}{0,0.54,1,0.24} 
\definecolor{pennlighterorange}{cmyk}{0,0.32,1,0.13}
\definecolor{pennlightestorange}{cmyk}{0,0.15,0.46,0.06}
\definecolor{penndarkestpurple}{cmyk}{0,1,0.11,0.86}
\definecolor{penndarkerpurple}{cmyk}{0,1,0.13,0.82}
\definecolor{pennpurple}{cmyk}{0,1,0.11,0.71} 
\definecolor{pennlighterpurple}{cmyk}{0,1,0.05,0.46}
\definecolor{pennlightestpurple}{cmyk}{0,0.35,0.02,0.23}
\definecolor{pennyellow}{cmyk}{0,0.20,1,0.05} 
\definecolor{pennlightgray1}{cmyk}{0,0,0,0.05}
\definecolor{pennlightgray3}{cmyk}{0.01,0.01,0,0.18}
\definecolor{pennmediumgray1}{cmyk}{0.04,0.03,0,0.31}
\definecolor{pennmediumgray4}{cmyk}{0.08,0.06,0,0.54}
\definecolor{penndarkgray2}{cmyk}{0.09,0.07,0,0.71}
\definecolor{penndarkgray4}{cmyk}{0.1,0.1,0,0.92}
\def\SO3{\mathrm{SO(3)}}
\newtheorem{lemma}{\hspace{0pt}\bf Lemma}
\newtheorem{proposition}{\hspace{0pt}\bf Proposition}
\newtheorem{theorem}{\hspace{0pt}\bf Theorem}
\newtheorem{corollary}{\hspace{0pt}\bf Corollary}
\newtheorem{remark}{\hspace{0pt}\bf Remark}
\newtheorem{definition}{\hspace{0pt}\bf Definition}
\journal{Journal of Signal Processing}
\begin{document}

\begin{frontmatter}

\title{Spherical Convolutional Neural Networks:\\Stability to Perturbations in $\SO3$}

%% Group authors per affiliation:
\author{Zhan Gao$^{\dagger}$\fnref{}, Fernando Gama$^{\ddagger }$ and Alejandro Ribeiro$^{\dagger}$}
\fntext[]{$^{ \dagger}$Department of Electrical and Systems Engineering, University of Pennsylvania, USA. Email: $\{$gaozhan,aribeiro$\}$@seas.upenn.edu. $^{\ddagger }$Department of Electrical Engineering and Computer Sciences, University of California, Berkeley, USA. Email: fgama@berkeley.edu}

\begin{abstract}
Spherical convolutional neural networks (Spherical CNNs) learn nonlinear representations from $3$D data by exploiting the data structure and have shown promising performance in shape analysis, object classification, and planning among others. This paper investigates the properties that Spherical CNNs exhibit as they pertain to the rotational structure inherent in spherical signals. We build upon the rotation equivariance of spherical convolutions to show that Spherical CNNs are stable to general structure perturbations. In particular, we model arbitrary structure perturbations as diffeomorphism perturbations, and define the rotation distance that measures how far from rotations these perturbations are. We prove that the output change of a Spherical CNN induced by the diffeomorphism perturbation is bounded proportionally by the perturbation size under the rotation distance. This stability property coupled with the rotation equivariance provide theoretical guarantees that underpin the practical observations that Spherical CNNs exploit the rotational structure, maintain performance under structure perturbations that are close to rotations, and offer good generalization and faster learning.
\end{abstract}

\begin{keyword}
Spherical convolutional neural networks, spherical convolutional filters, structure perturbations, stability analysis
\end{keyword}

\end{frontmatter}

%\linenumbers

%%%%%%%%%%%%%%%%%%%%%%%%%%%%%%%
%%% SECTION : Introduction  %%%
%%%%%%%%%%%%%%%%%%%%%%%%%%%%%%%

\section{Introduction} \label{sec:intro}

%!TEX root = 00-mainStabilitySO3.tex

%%%%%%%%%%%%%%%%%%%%%%%%%%%%%%%
%%% SECTION : Introduction  %%%
%%%%%%%%%%%%%%%%%%%%%%%%%%%%%%%

%This paper investigates the stability to perturbations in SO(3) of Rotational Neural Networks. Rotation Neural Network (RotaNN) has been put forward to handle challenging problems like 3D data classification by replacing normal convolution with spherical convolution, and stability is one of its most important properties concerning the performance of RotaNN. 3D data is modelled as spherical functions here. Based on the architecture, rotation equivariance is shown, and then the difference between RotaNN output representations of the spherical function and its perturbation is proved to be bounded by the "size" of the perturbation. Specially, it is bounded by the distance between the original spherical function and its perturbation, up to a multiplicative constant multiplied by the norm of function. Numerical experiments are performed

Modern image-acquisition technologies such as light detection and ranging (LIDAR) \cite{reutebuch2005light}, panorama cameras \cite{Chang2017Matterport3D} and optical scanners \cite{dury2015surface} obtain data that can be modelled on the spherical surface in $3$D space \cite{zhou20043d}. Spherical signals offer mathematical representations for such data, which essentially assign a scalar (or vector) value to each point on the spherical surface \cite{simons2013scalar, racah2017extremeweather}, and have been leveraged in applications of object classification in computer vision \cite{yavartanoo2018spnet}, panoramic video processing in self-driving cars \cite{geiger2013vision} and $3$D surface reconstruction in medical imaging \cite{maier2013optical}. Processing spherical signals in a successful manner entails architectures capable of exploiting the structural information given by the spherical surface. In particular, we are concerned about exploiting the rotational structure inherent in spherical signals \cite{makadia2004rotation, makadia2006rotation}.   

The rotational structure can be captured by means of the rotation group. This is a mathematical group defined in the space of spherical surface equipped with the operation of rotation \cite{gelfand2018representations}. The rotation group admits the definition of convolution operation for spherical signals, which in turn gives rise to spherical convolutional filters \cite{derighetti2011convolution}. The latter are linear processing operators that compute weighted, rotated linear combinations of spherical signal values \cite{makadia2010spherical, su2017learning}. Spherical convolutional filters exhibit the property of rotation equivariance, which means that they yield the same features irrespective of rotated versions of the input spherical signal \cite{esteves2018learning,cohen2018spherical}. This property is particularly meaningful when processing $3$D data where the information is contained in the relative location of signal values but not their absolute positions \cite{worrall2017harmonic,veeling2018rotation,li2019discrete}. Therefore, processing spherical signals with spherical convolutional filers, allows the resulting architecture to exploit the data structure for information extraction.

Spherical convolutional neural networks (Spherical CNNs) are developed as nonlinear processing architectures that consist of a cascade of layers, each of which applies a spherical convolutional filter followed by a pointwise nonlinearity \cite{esteves2018learning, cohen2018spherical, khasanova2017graph, frossard2017graph, perraudin2019deepsphere, defferrard2020deepsphere}. The inclusion of nonlinearities coupled with the cascade of multiple layers dons Spherical CNNs with an enhanced representative power, thus exhibiting superior performance for spherical signal processing. Several Spherical CNN architectures have been proposed, leveraging different implementations of the spherical convolution. In particular, \cite{esteves2018learning, cohen2018spherical} carry out the spherical convolution through the spherical harmonic transform, while \cite{khasanova2017graph, frossard2017graph, perraudin2019deepsphere, defferrard2020deepsphere} discretize the sphere using a graph connecting pixels and then approximate the spherical convolution with the Laplacian-based graph convolution.

%Spherical convolutional filters are linear and thus, are restricted to capture only linear relations between the input data and the relevant information. Spherical convolutional neural networks (Spherical CNNs) are developed as nonlinear processing architectures that consist of a cascade of layers, each of which applies a spherical convolutional filter followed by a pointwise nonlinearity \cite{esteves2018learning}. The inclusion of nonlinearities together with multiple layers dons Spherical CNNs with an enhanced representative power, thus exhibiting superior performance when processing spherical signals \cite{esteves2018learning,cohen2018spherical,perraudin2019deepsphere}.

Considering the evident success of Spherical CNNs in $3$D tasks, we focus on analyzing the properties that Spherical CNNs exhibit as they pertain to the rotational structure present in spherical signals. These properties are meant to shed light on the reasons behind this observed success and provide theoretical guarantees on the performance robustness under structure perturbations. Rotation equivariance of spherical convolutional linear filters was established in \cite{esteves2018learning, cohen2018spherical}, and Spherical CNNs were shown to be numerically effective in standard retrieval and classification problems.% However, there is no theoretical analysis beyond fundamental rotation operations, while practical structure perturbations can be arbitrary on the sphere. To improve on the latter, we aim here to characterize the stability of Spherical CNNs to general structure perturbations. It provides comprehensive insights on how Spherical CNNs exploit the rotational structure of spherical signals, and allows for identifying handle to control the architecture robustness.

%\red{Discuss here the theoretical insights in \cite{esteves2018learning, cohen2018spherical} (i.e. rotation equivariance) and also comment on the observed empirical success (saying that there are still no theoretical guarantees beyond rotation equivariance). Also discuss some of the other works Reviewer 2 mentioned.}

In this paper, we further investigate how spherical convolutional filters and Spherical CNNs react to general structure perturbations applied on input spherical signals. More specifically, we build upon the rotation equivariance of spherical convolutions \cite{esteves2018learning, cohen2018spherical} to prove that Spherical CNNs are Lipschitz stable to structure perturbations. To conduct such analysis, we leverage the Euler parametrization together with the normalized Haar measure to carry out spherical convolutions explicitly in the spherical coordinate system. We also model the arbitrary structure perturbation as the diffeomorphism perturbation, and define a notion of rotation distance that measures the difference between diffeomorphism perturbations and rotation operations. The established stability results indicate that Spherical CNNs extract the same information under rotations, and more importantly, it is able to maintain performance when structure perturbations are close to rotations. Our detailed contributions can be summarized as follows.

\smallskip
\begin{enumerate}[(i)]

%\item \emph{Rotation Equivariance (Section \ref{subsec:rotationEquivarianceConv} and Section \ref{subsec:rotationEquivarianceSCNN})}: We provide explicit proofs for the rotation equivariance. We prove the output of spherical convolutional filters and Spherical CNNs applied on the rotated signal is the rotated output of them applied on the unperturbed signal [Prop. \ref{prop:rotationEquivarianceConv} and \ref{PropositionEquivarianceSCNN}]. The results indicate both architectures extract same information irrespective of rotation operations, based on which we define the rotation distance that measures signal dissimilarity modulo rotations [Def. \ref{def:rotationDistance}], serving for stability analysis to general structure perturbations.

\item \emph{Stability of spherical convolutional filters (Section \ref{sec:structureConv})}: We prove the output difference of spherical convolutional filters induced by general structure perturbations, defined as diffeomorphism perturbations, is bounded proportionally by the perturbation size under the rotation distance [Thm. \ref{thm:stabilityConv}]. This result extends the stability analysis from regular rotation operations to irregular diffeomorphism perturbations, and establishes that almost same information can be extracted if structure perturbations are close to rotation operations. 

\item \emph{Stability of Spherical CNNs (Section \ref{sec:stabilitySCNN})}: We show Spherical CNNs inherit the stability to general structure perturbations with a factor proportional to the perturbation size [Thm. \ref{thm:stabilitySCNN}]. It also indicates the explicit impact of the filter, the nonlinearity, the architecture width and depth on the stability property. In particular, a wider and deeper Spherical CNN degrades the stability but improves the representative power, indicating a trade-off between these two factors.

\end{enumerate}

The stability property to arbitrary perturbations plays an important role in improving the generalization capability of Spherical CNNs. In the case of $3$D object identification \cite{wu20153d}, for instance, mild changes in spherical signals may be introduced by different viewing angles or distances but Spherical CNNs are expected to extract the same information. Section \ref{sec:sims} provides numerical experiments to corroborate theory on the problem of $3$D object classification and Section \ref{sec:conclusions} concludes the paper. All proofs are collected in the appendix.

\section{Spherical Convolutional Neural Networks} \label{sec:SCNN}

%!TEX root = 00-mainStabilitySO3.tex

%%%%%%%%%%%%%%%%%%%%%%%%%%%%%%%%%%%%%%%%%%%%
%%% SECTION : Spherical Neural Networks  %%%
%%%%%%%%%%%%%%%%%%%%%%%%%%%%%%%%%%%%%%%%%%%%

Signals arising from $3$D data, e.g., LIDAR images, X-ray models, surface data, etc., can be described as inscribed on the spherical surface in $3$D space \cite{simons2013scalar, racah2017extremeweather}. In Sec.~\ref{subsec:sphericalSignals} we introduce the mathematical description of these spherical signals, in Sec.~\ref{subsec:sphericalConv} we discuss the spherical convolution as the fundamental linear operation between spherical signals, and in Sec.~\ref{subsec:sphericalCNN} we present the spherical convolutional neural network (Spherical CNN).

%%%%%%%%%%
%% FIGURE: fig:sphericalPoint
%%%%%%%%%%
%
\begin{figure}
    \centering
    \includegraphics[width=0.75\columnwidth]
    {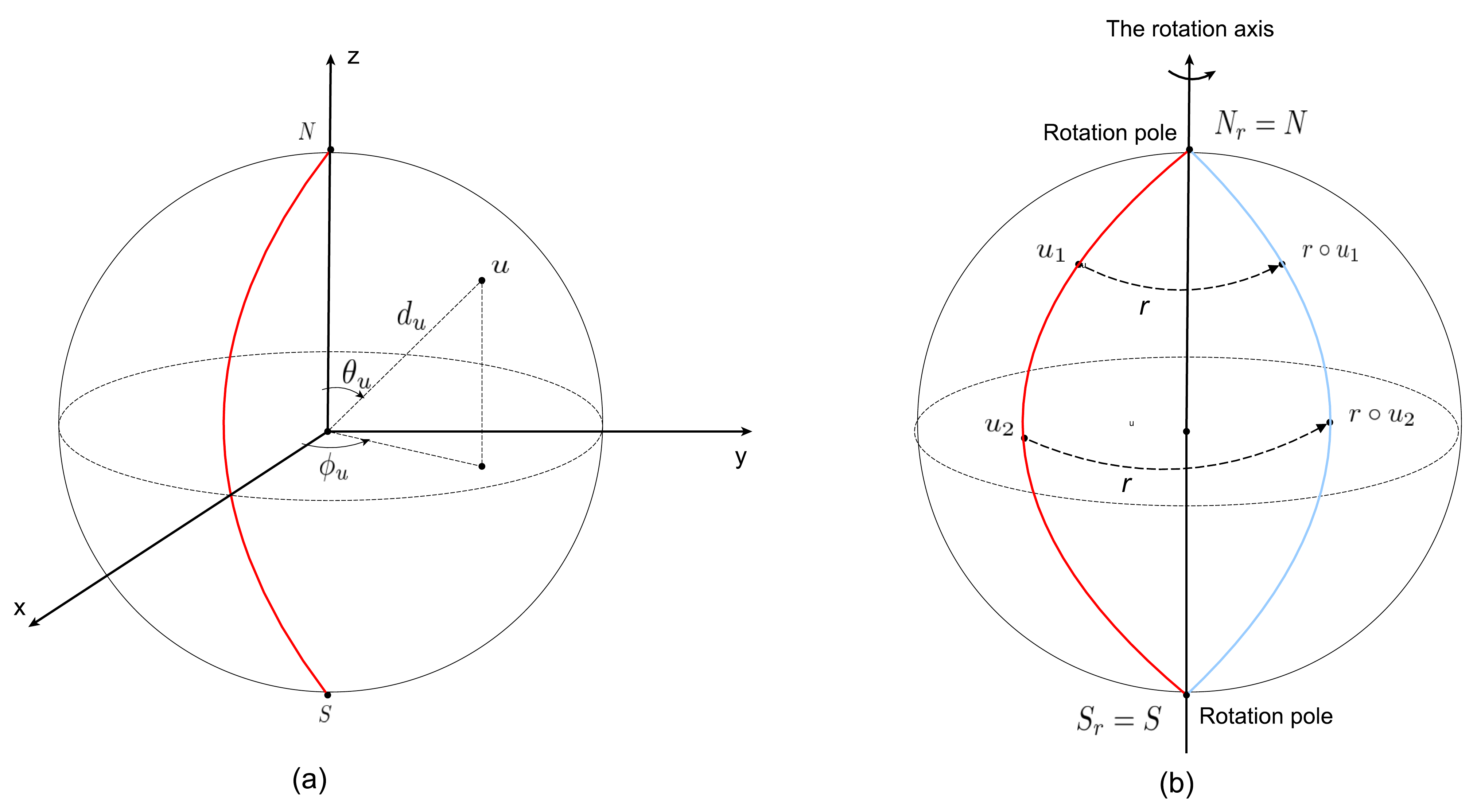}
    \caption{(a) The spherical coordinate system. (b) Azimuthal rotation on the sphere. The rotation displaces points on the sphere along the rotation axis, e.g., it displaces points on the red line to corresponding points on the blue line.}%
    \label{fig:sphericalPoint}
\end{figure}
%
%% End of FIGURE
%%%

%%%%%%%%%%%%%%
% SUBSECTION : Spherical signals
%%%%%%%%%%%%%%

\subsection{Spherical signal} \label{subsec:sphericalSignals}

Let $\mbS_{2} \subset \reals^{3}$ be the spherical surface contained in $\reals^{3}$. Given a coordinate system $\ccalS(x,y,z)$, a point $u = (x_{u}, y_{u}, z_{u}) \in \mbS_{2}$ can be alternatively described by the polar angle $\theta_u = \arctan(\sqrt{x_{u}^{2}+y_{u}^{2}}/z_{u}) \in [0,\pi]$ from the $z$-direction, the azimuth angle $\phi_u = \arctan(x_{u}/y_{u}) \in [0,2\pi)$ along the $xy$-plane, and the distance $d = \sqrt{x_u^{2}+y_u^{2}+z_u^{2}}$ from the origin. Without loss of generality, we assume the unit sphere with $d=1$---see Fig. 1a for the spherical coordinate system. More specifically, a point $u \in \mbS_{2}$ can be represented by the vector
% eq:sphericalPoint
\begin{equation} \label{eq:sphericalPoint}
    \!u 
    \!=\! (\theta_u,\phi_u) 
    \!=\!\! \big[
        \sin(\theta_u) \cos(\phi_u),
        \sin(\theta_u) \sin(\phi_u),
        \cos(\theta_u)
    \big].
\end{equation}
For points whose polar angle are $\theta_u = 0$ or $\theta_u = \pi$, the azimuth angle is assumed to be zero. These points are known as the north and south poles, referred to as $N \in \mbS_{2}$ and $S \in \mbS_{2}$ respectively [cf. Fig.~\ref{fig:sphericalPoint}a].%\footnote{Formally speaking, the north pole gets mapped into the south pole, and viceversa. These points are given by $\tan(\theta_{N}) = \sin(\theta_{r})/(\cos(\theta_{r}-1))$ and $\tan(\phi_{N}) = \sin(\phi_{r})/(\cos(\phi_{r}-1))$, and $\tan(\theta_{S}) = (1-\cos(\theta_{r}))/\sin(\theta_{r})$ and $\tan(\phi_{S}) = (1-\cos(\phi_{r}))/\sin(\phi_{r})$.}. 

A \emph{spherical signal} is defined as the map $x: \mbS_{2}\to \reals$ which assigns a scalar $x(u) \in \reals$ to each point $u \in \mbS_{2}$ on the sphere. Equivalently, these signals can be represented as mappings from the pair of angular variables $(\theta_u,\phi_u)$ to the real line $\mbR$, i.e. $x(u) = x(\theta_u,\phi_u) \in \mathbb{R}$. Spherical signals are typically used to describe data collected from $3$D objects---see Fig.~\ref{fig:tetrahedron} for an example on how to cast the surface of $3$D objects as spherical signals.

%%%%%%%%%%
%% FIGURE: fig:tetrahedron
%%%%%%%%%%
%
\begin{figure}[t]
    \centering
    \includegraphics[width=0.5\columnwidth, trim=10 10 10 10]{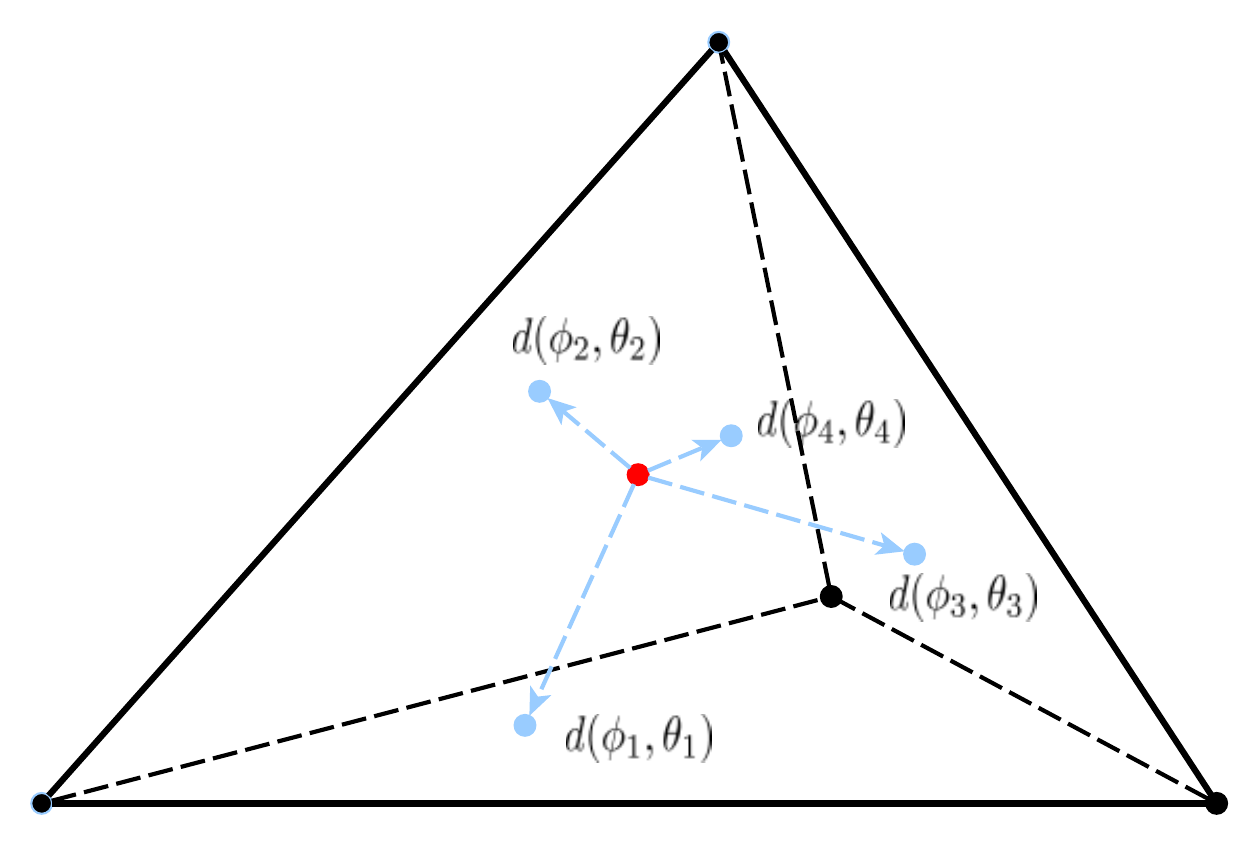}
    \caption{Spherical signal representing the surface data of a tetrahedron. The tetrahedron contains interior points from which the whole boundary is visible. Assume the object center as one of these interior points, from which we cast rays that intersect the object surface. Denote by $d(\phi_u,\theta_u)$ the distance between the farthest intersection point and the center along the ray direction $(\phi_u, \theta_u)$, where $\theta_u$ is the polar angle and $\phi_u$ is the azimuth angle. The surface data can be represented as a spherical signal $x(\phi_u, \theta_u) = d (\phi_u, \theta_u) \in \mathbb{R}$, which maps angular variables to the distance.}
    \label{fig:tetrahedron}
\end{figure}
%
%% End of FIGURE
%%%

%%%%%%%%%%%%%%
% SUBSECTION : The rotation operation
%%%%%%%%%%%%%%

\subsection{Rotation operation} \label{subsec:rotation}

Rotations are elementary operations to which a spherical signal can be subject \cite{makadia2004rotation,makadia2006rotation}. A \emph{rotation} $r$ is uniquely characterized by the rotation point $N^{r} \in \mbS_{2}$ and the rotation angle $\beta^{r} \in [0,2\pi)$. It displaces the points on the sphere by $\beta^{r}$ degrees along the axis that passes through the origin and the point $N^{r}$; thus, we may refer to $N^{r}$ as the rotation axis. To describe the rotation in the spherical coordinate system [cf. \eqref{eq:sphericalPoint}], let $r^{\theta}: \mbS_{2}\to \reals$ be the polar angle displacement and $r^{\phi}: \mbS_{2}\to \reals$ be the azimuth angle displacement induced by the rotation $r$. These two quantities parametrize the rotation operation $r:\mbS_{2} \to \mbS_{2}$ as
% eq:rotationOp
\begin{align} \label{eq:rotationOp}
    r (u)
    & = r (\theta_u,\phi_u) = \!\big[
        \sin(\theta_u + r^{\theta}(\theta_u,\phi_u)) \cos(\phi_u + r^{\phi}(\theta_u,\phi_u)),\ \\
    &\qquad\qquad\qquad \sin(\theta_u \!+\! r^{\theta}(\theta_u,\phi_u)) \sin(\phi_u \!+\! r^{\phi}(\theta_u,\phi_u)),~\cos(\theta_u \!+\! r^{\theta}(\theta_u,\phi_u))
    \big]\nonumber
\end{align}
where $r^{\theta}$ and $r^{\phi}$ are determined by the Rodrigues' rotation formula \cite{mebius2007derivation}. The rotation operation takes a point $u$ on the sphere $\mbS_{2}$ and maps it into another point $r(u)$, also on the sphere $\mbS_{2}$, but where the polar angle has been displaced by $r^{\theta}$ and the azimuthal angle by $r^{\phi}$. The rotation axis intersects the spherical surface at two points, namely $N^{r} \in \mbS_{2}$ referred to as the rotation north pole and $S^{r} \in \mbS_{2}$ referred to as the rotation south pole. These two points remain unchanged under the rotation operation. As an illustrative example, consider the azimuthal rotation whose rotation axis coincides with the $z$-axis of the coordinate system $\ccalS(x,y,z)$, i.e. $N^{r} = N$ and $S^{r} = S$---see Fig.~\ref{fig:sphericalPoint}b. Then, $r^{\theta}(\theta_{u},\phi_{u}) = 0$ and $r^{\phi}(\theta_{u},\phi_{u}) = \beta^{r}$ are constants for all $(\theta_{u},\phi_{u}) \in [0,\pi]\times [0,2\pi)$.

We can also describe the rotation operation $r \in \SO3$ in terms of the $ZYZ$ Euler parametrization as
\begin{equation} \label{eq:rotationEuler}
r = r_{\phi_{r} \theta_{r} \rho_{r}} = r^{z}_{\phi_{r}} \circ r^{y}_{\theta_{r}} \circ r^{z}_{\rho_{r}}
\end{equation}
where $r_{\phi_{r}}^{z}$, $r_{\theta_{r}}^{y}$ and $r_{\rho_{r}}^{z}$ are the rotations along the Euclidean coordinate axes with the rotation angles of $\phi_{r} \in [0, 2\pi)$, $\theta_{r} \in [0,\pi]$ and $\rho_{r} \in [0, 2\pi)$, respectively. The Euler parametrization allows us to decompose any rotation $r$ as three consecutive rotations. The set of all rotation operations about the origin [cf. \eqref{eq:rotationOp}] defines a mathematical group under the operation of composition, referred to as the \emph{$3$D rotation group} $\SO3$ \cite{scott2012group, kostelec2008ffts}. This is a group in the sense that (i) composing two rotations $r_{1}, r_{2} \in \SO3$ results in another rotation $r_{1} \circ r_{2} \in \SO3$, (ii) rotations are associative $(r_{1} \circ r_{2}) \circ r_{3} = r_{1} \circ (r_{2} \circ r_{3})$ for any $r_{1},r_{2},r_{3} \in \SO3$, (iii) the rotation determined by $r^{\theta}=r^{\phi}=0$ is the identity rotation $r_{0} \in \SO3$ such that $r \circ r_0 = r$ for all $r \in \SO3$, and (iv) for every rotation $r \in \SO3$ there exists another rotation $r^{-1} \in \SO3$, such that $r \circ r^{-1} = r_{0}$.

%%%%%%%%%%%%%%
% SUBSECTION : Spherical convolution
%%%%%%%%%%%%%%

\subsection{Spherical convolution} \label{subsec:sphericalConv}

The spherical surface $\mbS_{2}$ is now equipped with the rotation operation that conforms a mathematical group. This allows defining the convolution operation for spherical signals. Given two signals $x,h: \mbS_{2} \to \reals$ the spherical convolution $\ast_{\SO3}$ between them is defined as
% eq:sphericalConv
\begin{equation} \label{eq:sphericalConv}
y(u) = (h \ast_{\SO3} x)(u) = \int_{\SO3} h\big(r^{-1} (u)\big) x\big(r(u_0)\big)\ dr,~\forall~ u \in \mbS_2
\end{equation}
where $y(u)$ is the output spherical signal and $u_0 = (0,0) \in \mbS_2$ is the point given by $\theta_{u_0} = \phi_{u_0} = 0$.

We can rewrite the convolution operation in the spherical coordinate system by leveraging the Euler parametrization of rotations [cf. \eqref{eq:rotationEuler}]. Recall the normalized Haar measure on the rotation group $\SO3$ as \cite{driscoll1994computing}
\begin{equation} \label{eq:normalizedMeasureRotation}
    dr = \frac{d\phi_{r}}{2 \pi} \frac{\sin(\theta_{r})d\theta_{r}}{2} \frac{d \rho_{r}}{2 \pi},
\end{equation}
and we can rewrite \eqref{eq:sphericalConv} as
% eq:sphericalConvRectangle
\begin{align} \label{eq:sphericalConvEuler}
y(u)&= (h \ast_{\SO3} x)(u) \\
&= \frac{1}{8 \pi^2} \!\int\!
        h\big( r^{-1}_{\phi_{r}\theta_{r}\rho_{r}} (u) \big)  x\big( r_{\phi_{r}\theta_{r}\rho_{r}} (u_0)\big) \sin(\theta_{r})  \ d\theta_{r} d\phi_{r} d\rho_{r} \nonumber
\end{align}
where $r_{\phi_{r}\theta_{r}\rho_{r}}$ is the rotation parametrized by Euler angles $\phi_{r},\theta_{r},\rho_{r}$ [cf. \eqref{eq:rotationEuler}] and $r^{-1}_{\phi_{r}\theta_{r}\rho_{r}}$ is its inverse rotation. In the spherical coordinate system, by noting that $r_{\phi_{r}\theta_{r}\rho_{r}}(u_{0}) = (\theta_{r}, \phi_{r})$ and using the Rodrigues' formula \eqref{eq:rotationOp}, we further get
% eq:sphericalConvRectangle
\begin{align} \label{eq:sphericalConvRectangle}
& y(\theta_{u},\phi_{u}) 
 = (h \ast_{\SO3} x)(\theta_{u},\phi_{u}) \\
& \!=\!\! \frac{1}{8 \pi^2} \!\!\int\!\!
h\big( \theta_{u}\!+\!r_{\phi_{r}\theta_{r}\rho_{r}}^{\theta \ -1}(\theta_{u},\phi_{u}),  \phi_{u}\!+\!r_{\phi_{r}\theta_{r}\rho_{r}}^{\phi \ -1}(\theta_{u},\phi_{u})\big) x( \theta_{r},\phi_{r} )\sin(\theta_{r})  \ d\theta_{r} d\phi_{r} d\rho_{r}.\nonumber
\end{align}
where $r_{\phi_{r}\theta_{r}\rho_{r}}^{\theta \ -1}$ and $r_{\phi_{r}\theta_{r}\rho_{r}}^{\phi \ -1}$ are angle displacements induced by $r_{\phi_{r}\theta_{r}\rho_{r}}^{ -1}$. We see that \eqref{eq:sphericalConvRectangle} is equivalent to \eqref{eq:sphericalConv} and \eqref{eq:sphericalConvEuler}, but described in terms of the polar angle $\theta_{u}$ and the azimuthal angle $\phi_{u}$ on which we can evaluate the output of spherical convolution mathematically. That is, we move from a description given entirely in the realm of rotation group $\SO3$ to a description given by specific angular variables in the spherical coordinate system [cf. \eqref{eq:sphericalPoint}]. It is important because this allows us to carry out the integral of the spherical convolution explicitly, serving for the stability analysis in the following sections.

%%%%%%%%%%
%% REMARK: rmk:groupConv
%%%%%%%%%%
%
\begin{remark}[Group convolution] \label{rmk:groupConv} \normalfont
Convolution operations can be formally defined with respect to any group that is applicable to the space on which the convolving signals are defined. That is, let $x,h:\mbX \to \reals$ be two signals mapping some space $\mbX$ into the real line, and let $\mathsf{G} = \mathsf{G}(\mbX, \circ)$ be a group where the element $g \in \mathsf{G}$ is defined over $\mbX$, $\circ$ is the binary operation, and $g^{-1} \in \mathsf{G}$ is the inverse element. The convolution operation $\ast_{\mathsf{G}}$ is generically defined as
% eq:groupConv
\begin{equation} \label{eq:groupConv}
(h \ast_{\mathsf{G}} g)(u) = \int_{\mathsf{G}} h\big(g^{-1} (u) \big) x\big(g (u_{0})\big)\ dg
\end{equation}
with $u, u_{0} \in \mbX$. The spherical convolution \eqref{eq:sphericalConv} is one instance of the group convolution for $\mathsf{G} = \SO3$. Another example is the regular convolution defined over the group of translations. We remark that many of the results derived in this work are applicable to generic group convolutions---see \cite{Mallat12-Scattering} for details.
\end{remark}
%
%% End of REMARK
%%%

%%%%%%%%%%%%%%
% SUBSECTION : Spherical convolutional neural networks
%%%%%%%%%%%%%%

\subsection{Spherical convolutional neural network} \label{subsec:sphericalCNN}

Spherical convolutions are linear operations that exploit the rotational structure present in spherical signals. Thus, they can be used to regularize the linear transform in neural networks such that the resulting architecture is able to leverage the rotational structure when processing spherical signals from $3$-D tasks. In particular, neural networks are nonlinear maps that consist of a cascade of $L$ layers, each applying a linear transform $A_{\ell}$ followed by a pointwise nonlinearity $\sigma_{\ell}$ \cite[Ch. 6]{Goodfellow16-DeepLearning}
% eq:neuralNetwork
\begin{equation} \label{eq:neuralNetwork}
    x_{\ell} = \sigma_{\ell}\Big[A_{\ell} x_{\ell-1} \Big] \ , \ \ell = 1,\ldots,L
\end{equation}
with $x_{0} = x$ the input signal. The number of layers $L$ as well as the specific form of nonlinearity $\sigma_{\ell}$ are typically design choices, while the linear trasforms $\{A_{\ell}\}_{\ell=1}^L$ are \emph{learned} by minimizing some objective function over a training set. Neural networks in such a general form work well only for small input data size, otherwise the space of learnable linear transforms $\{A_{\ell}\}_{\ell=1}^L$ becomes too large to efficiently explore, and the resulting mapping rarely generalizes well \cite[Ch. 9]{Goodfellow16-DeepLearning}. Successful neural network architectures like convolutional neural networks \cite{krizhevsky2012imagenet, hu2014convolutional, li2015convolutional} or graph neural networks \cite{Bruna14-DeepSpectralNetworks, Defferrard17-ChebNets, Gama19-Architectures} typically regularize the space of linear transforms by imposing the data structure information on the operation $A_{\ell}$ (e.g., CNNs exploit the Euclidean structure present in data and GNNs exploit the graph structure present in data).

Working with spherical signals demands neural network architectures that exploit the rotational structure. This motivates to develop the spherical convolutional neural network (Spherical CNN) where we regularize the linear transform $A_{\ell}$ to be a spherical convolution [cf. \eqref{eq:sphericalConv}], namely
% eq:singleFeatureSCNN
\begin{equation} \label{eq:singleFeatureSCNN}
    x_{\ell} = \sigma_{\ell} \Big[ h_{\ell} \ast_{\SO3} x_{\ell-1} \Big] \ , \ \ell = 1,\ldots,L
\end{equation}
with $x_{0} = x$ the input signal \cite{esteves2018learning}. We see in \eqref{eq:singleFeatureSCNN} that the linear transform is now forced to be a spherical convolution. The learnable linear transforms are now the collection of spherical convolutional filters $\ccalH = \{h_{\ell}\}_{\ell=1}^L$. 

The descriptive power of Spherical CNNs can be enhanced by considering a multi-feature signal mapping $\bbx: \mbS_{2} \to \reals^{F}$, where we map each point on the sphere to a $F$-dimensional vector. Each entry of the vector is termed a \emph{feature} and the multi-feature spherical signal can be thought of as a collection of $F$ single-feature spherical signals $x^{f}: \mbS_{2} \to \reals$, i.e. $\bbx = \{x^{f}\}_{f=1}^F$. In this context, the convolution operation becomes a series of spherical convolutions where the input signal is processed through a filter bank. More specifically, an output signal with $G$ features $\bby: \mbS^{2} \to \reals^{G}$ can be obtained by
% eq:filterBankConv
\begin{equation} \label{eq:filterBankConv}
    y^{g} = \sum_{f=1}^{F} h^{fg} \ast_{\SO3} x^{f},~\forall~f = 1,\ldots,F,~ g = 1,\ldots,G.
\end{equation}
Operations \eqref{eq:filterBankConv} define the spherical convolution acting on multi-feature spherical signals, which is equivalent to filtering $F$ features of the input signal through a bank of $FG$ filters and then aggregating the output of each $g$th filter across all $F$ input features. To simplify notation, we denote the single-feature spherical convolution [cf. \eqref{eq:sphericalConv}] as a filtering operation, i.e. $y = H(x) = h \ast_{\SO3} x$. The output of the last layer is $\Phi(x;\ccalH) = x_{L}$ where $\Phi:\mbS_{2} \to \mbS_{2}$ represents the nonlinear mapping given by the Spherical CNN [cf. \eqref{eq:singleFeatureSCNN}]. The learnable parameters in the Spherical CNN consist of the filters $\ccalH = \{H_{\ell}\}_{\ell=1}^L$ (or $\ccalH = \{H_{\ell}^{fg}\}_{\ell=1}^L$ for the multi-feature scenario). The design hyperparameters are the number of layers $L$, the nonlinearity $\sigma_{\ell}$, and the number of features $F_{\ell}$ at each layer.

The resulting Spherical CNN [cf. \eqref{eq:singleFeatureSCNN}] leverages the $3$D data structure and thus, we expect it to exhibit strong performance on $3$D tasks (as a matter of fact, this is observed in practice \cite{esteves2018learning}). In what follows, we aim to analyze the properties that Spherical CNNs exhibit when processing spherical signals, i.e., the stability to arbitrary structure perturbations, to explicitly illustrate how they exploit the rotational structure and to bring insights behind their observed superior performance.

%\begin{remark} \normalfont
	%Oftentimes, convolutional neural networks include a multi-layer perceptron (concatenation of fully connected layers) after the last convolutional layer $x_{L}$, typically to adapt the dimension difference between the given data and the target representation. This last layer does not exploit the data structure. Also, to keep the dimensionality of $x_{\ell}=\{x_{\ell}^{f}\}_{f=1}^F$ small, a pooling operation is included, by which each $x_{\ell}^{f}$ is downsampled, while the number of features grows (i.e. the spatial dimension is traded for the feature dimension).
    %In what follows, however, we focus only on the spherical convolutional layers and their interaction with the pointwise nonlinearities, and thus we consider the Spherical CNNs to be nonlinear maps between spherical signals.
%\end{remark}

%\begin{figure}[t]
%\centering
%\includegraphics[width=0.9\linewidth , height=0.53\linewidth, trim=10 10 10 10]{figure/rayexample.eps}
%\caption{Ray-mesh intersection for a rectangular object. }
%\end{figure}

%%%%%%%%%%%%%%%%%%%%%%%%%%%%%%%%%%%%%%%%%%%%%%%%%%%%%%%%%%%%%%%%%%%
%%% SECTION : Structural Properties of Spherical Convolutions   %%%
%%%%%%%%%%%%%%%%%%%%%%%%%%%%%%%%%%%%%%%%%%%%%%%%%%%%%%%%%%%%%%%%%%%

\section{Stability of Spherical Convolutional Filters} \label{sec:structureConv}

%!TEX root = 00-mainStabilitySO3.tex

%%%%%%%%%%%%%%%%%%%%%%%%%%%%%%%%%%%%%%%%%%%%%%%%%%%%%%%%%%%%%%%%%%%
%%% SECTION : Structural Properties of Spherical Convolutions   %%%
%%%%%%%%%%%%%%%%%%%%%%%%%%%%%%%%%%%%%%%%%%%%%%%%%%%%%%%%%%%%%%%%%%%

Adopting the spherical convolution as the main operation to process spherical signals exploits the rotational structure of $3$D data and thus exhibits improved performance. In this section, we analyze the effect that structure perturbations in spherical signals have on the output of spherical convolutional filters, to explain theoretical reasons behind its superior performance.

%When the input spherical signal is a rotated version of another one, then the output of the spherical convolution is a correspondingly rotated version, a property known as rotation equivariance (Sec.~\ref{subsec:rotationEquivarianceConv}). In terms of more general perturbations on the input spherical signal (Sec.~\ref{subsec:signalDissimilarity}), spherical convolutions turn out to be stable when the changes are close to rotations (Sec.~\ref{subsec:stabilityConv}).

%%%%%%%%%%%%%%
% SUBSECTION : Rotation equivariance
%   subsec:rotationEquivarianceConv
%%%%%%%%%%%%%%

\subsection{Rotation equivariance}\label{subsec:rotationEquivarianceConv}

We begin by considering rotation operations as fundamental structure perturbations in spherical signals. Let $r \in \SO3$ be a rotation and $x$ be a spherical signal. Define the rotated signal $x_r$ as
% eq:rotatedSignal
\begin{equation} \label{eq:rotatedSignal}
    x_{r}(u) = x(r (u)) = x(\theta_u + r^{\theta}(\theta_u,\phi_u),\phi_u + r^{\phi}(\theta_u,\phi_u)),~\forall~ u \in \mbS_2,
\end{equation}
where the last equality responds to the rotation displacements following the Rodrigues' formula [cf. \eqref{eq:rotationOp}]. We formally state the rotation equivariance of spherical convolutional filters in the following proposition. 

%%%%%%%%%%%%%%
% PROPOSITION: prop:rotationEquivarianceConv
%%%%%%%%%%%%%%
%
\begin{proposition} \label{prop:rotationEquivarianceConv}
Let $x$ be a spherical signal, $H$ be a spherical convolutional filter, and $r \in \SO3$ be a rotation. Denote by $y=H(x)$ the filter output and $y_r$ the rotated signal of $y$ [cf. \eqref{eq:rotatedSignal}]. Then, it holds that
% eq:rotationEquivarianceConv
\begin{equation} \label{eq:rotationEquivarianceConv}
y_{r} = H(x_{r}).
\end{equation}
\end{proposition}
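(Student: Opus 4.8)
The plan is to establish the pointwise identity $H(x_r)(u) = y_r(u) = y(r \circ u)$ for every $u \in \mbS_{2}$, working directly from the integral definition of the spherical convolution \eqref{eq:sphericalConv}. To keep the fixed rotation $r$ of the statement distinct from the dummy variable of integration, I would first rename the latter, writing $y(u) = \int_{\SO3} h(s^{-1} \circ u)\, x(s \circ u_{0})\, ds$. Expanding $H(x_r)(u)$ and inserting the definition of the rotated signal \eqref{eq:rotatedSignal}, observe that $x_r(s \circ u_{0}) = x(r \circ (s \circ u_{0})) = x((r \circ s) \circ u_{0})$ by associativity of the group operation, so that
\[
H(x_r)(u) = \int_{\SO3} h(s^{-1} \circ u)\, x((r \circ s) \circ u_{0})\, ds .
\]

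The core of the argument is a change of variables $t = r \circ s$ in this integral. I would substitute $s = r^{-1} \circ t$, use the inverse-of-a-product rule $s^{-1} = (r^{-1} \circ t)^{-1} = t^{-1} \circ r$, and apply associativity once more to get $s^{-1} \circ u = t^{-1} \circ (r \circ u)$, while the argument of $x$ collapses to $(r \circ s) \circ u_{0} = t \circ u_{0}$. The crucial fact is that the normalized Haar measure \eqref{eq:normalizedMeasureRotation} is invariant under the left translation $s \mapsto r \circ s$, so $ds = dt$ and the domain of integration remains all of $\SO3$. The integral then reads $\int_{\SO3} h(t^{-1} \circ (r \circ u))\, x(t \circ u_{0})\, dt$, which is exactly $y(r \circ u) = y_r(u)$ by \eqref{eq:sphericalConv} and \eqref{eq:rotatedSignal}, settling the single-filter case.

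For the multi-feature filter bank \eqref{eq:filterBankConv}--\eqref{eq:filterBankAgg}, I expect the conclusion to follow immediately: equivariance holds for each scalar convolution $y^{fg} = h^{fg} \ast_{\SO3} x^{f}$ by the argument above, and the aggregation $y^{g} = \sum_{f} y^{fg}$ commutes with rotation because the operation $(\cdot)_r$ acts linearly and pointwise on signals. The main obstacle, and the step deserving the most care, is the justification that $ds = dt$ under $t = r \circ s$: this rests on the left-invariance of the Haar measure on the compact group $\SO3$. I would either invoke this invariance directly or verify it from the explicit Euler-angle form \eqref{eq:normalizedMeasureRotation}; everything else is routine bookkeeping with the group axioms (associativity and inverses) recorded in Sec.~\ref{subsec:rotation}.
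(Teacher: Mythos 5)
Your proof is correct and follows essentially the same route as the paper's: you expand $H(x_r)(u)$, use associativity to rewrite $h(s^{-1}\circ u)$ as $h(t^{-1}\circ(r\circ u))$ with $t = r\circ s$, and change variables in the integral, which is exactly the paper's substitution $r' = r\circ\hat r$. You are in fact slightly more careful than the paper, since you explicitly invoke the left-invariance of the Haar measure to justify $ds = dt$ (the paper performs the substitution tacitly) and you spell out the routine multi-feature extension via linearity of the aggregation in \eqref{eq:filterBankAgg}.
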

%
%%%%%%%%%
%% Proof:
%%%%
%
\begin{proof}
    See \ref{proof:PropositionEquivarianceCov} for a detailed proof that completes that in \cite{esteves2018learning, cohen2018spherical}.
\end{proof}
%
%% End of Proof
%%%%%%%%%%%%%%%%
%
%% End of Proposition
%%%

Proposition \ref{prop:rotationEquivarianceConv} states that applying a spherical convolutional filter $H$ to a rotated version of the signal $H(x_{r})$ yields an associated rotated version of the output $y_{r}$, where $y = H(x)$ is the output of filtering the original signal. This indicates that the spherical convolutional filter is capable of harnessing the same information irrespective of what rotated version of the spherical signal is processed. The rotation equivariance serves to improve transference and speed up training, since learning how to process one instance of the signal is equivalent to learning how to process all rotated versions of the same signal. This is analogous to the effect that translation equivariance has on CNNs \cite{Mallat12-Scattering} and the effect that permutation equivariance has on GNNs \cite{Gama20-Stability}.

%In most practical settings, the information contained in a spherical signal does not change under rotations. That is, the information that is relevant for the given application is present irrespective of the specific angle at which we are looking at the signal. For example, the surface and the shape of a $3$-D object remains unchanged under rotation, and thus any shape analysis method should be equivariant to rotations \cite{kurtek2010novel, hamsici2008rotation}. Therefore, it is of paramount importance that the processing tools that we choose are not affected by rotations as well.

%%%%%%%%%%%%%%
% SUBSECTION : Signal dissimilarity modulo rotations
%   subsec:signalDissimilarity
%%%%%%%%%%%%%%

\subsection{Signal dissimilarity modulo rotations} \label{subsec:signalDissimilarity}

In general, we are more interested in how Spherical CNNs fare when acting on two signals that are similar but not quite the same. In a sense, we expect the architecture to tell apart spherical signals when they are sufficiently different but to give a similar output if the signal change is due to noise or unimportant causes \cite{chen2011geometric, huang2016damped, nazari2017data}. The rotation equivariance (Prop.~\ref{prop:rotationEquivarianceConv}) states that the same information is obtained when processing all rotated versions of the same signal. This suggests that to measure how (dis)similar two signals are, we need to abstract rotations or, in other words, we need to do it modulo rotations. Motivated by this consideration, we define the rotation distance as follows.

%%%%%%%%%%%%%
% DEFINITION: def:rotationDistance
%%%%%%%%%%%%%
%
\begin{definition}[Rotation distance] \label{def:rotationDistance}
For two spherical signals $x, \hhatx$, the rotation distance is defined as
% eq:rotationDistance
\begin{equation} \label{eq:rotationDistance}
\| x-\hhatx \|_{\SO3} = \inf_{r \in \SO3} \| x - \hhatx_{r} \|
\end{equation}
where $\hhatx_{r}(u) = \hhatx(r (u))$ is the rotated spherical signal of $\hhatx$ by the rotation $r$ and $\| \cdot \|$ is the norm representation.
\end{definition}
%
%% End of DEFINITION
%%%
%
\noindent Any valid norm can be applied in the rotation distance (Def.~\ref{def:rotationDistance}). In the case of spherical signals, we adopt the normalized spherical norm.
%
%%%%%%%%%%%%%
% DEFINITION: def:rotationDistance
%%%%%%%%%%%%%
%
\begin{definition}[Normalized spherical norm] \label{def:rotationNorm}
	For a spherical signal $x: \mbS_{2} \to \reals$, the normalized spherical norm is defined as
    \begin{gather} \label{eq:rotationNorm}
    \| x \|^2 = \frac{1}{2 \pi^2}\int x(\theta_u,\phi_u)^2 d\theta_u d \phi_u.
    \end{gather}
    where $\theta_u \in [0,\pi]$ and $\phi_u \in [0,2\pi)$ describe the support of the spherical signal $x$ in the spherical coordinate system [cf. \eqref{eq:sphericalPoint}]. For multi-feature spherical signals $\bbx = \{x^{f}\}_{f=1}^{F}$, we define $\|\bbx\|^2 = \sum_{f=1}^{F} \| x^{f}\|^2$.
\end{definition}
%
%% End of DEFINITION
%%%
%

\noindent This is a proper norm in the sense that it is (i) absolutely scalable, (ii) positive definite, and (iii) satisfying the triangular inequality. Equipping spherical signals with a norm allows us to define the following normed space
% eq:L2spherical
\begin{equation} \label{eq:L2spherical}
    \mbL^{2}(\mbS_{2}) = \big\{ \bbx: \mbS_{2} \to \reals^{F} \ : \ \| \bbx \| < \infty \big\}.
\end{equation}
The $\mbL^{2}(\mbS_{2})$ space holds for single-feature signals by setting $F=1$ in \eqref{eq:L2spherical} and using the corresponding definition of the normalized spherical norm [cf. Def.~\ref{def:rotationNorm}]. In essence, $\mbL^{2}(\mbS_{2})$ is the space of all finite-energy spherical signals.

Note that, if $\hhatx(u) = x(r (u)) = x_{r}(u)$ is a rotated version of the spherical signal $x$ by $r$, then $\| x - \hhatx\|_{\SO3} = \|x - x_{r}\|_{\SO3} = 0$. Under this dissimilarity measure given by the rotation distance (Def.~\ref{def:rotationDistance}), Prop.~\ref{prop:rotationEquivarianceConv} can be reframed with respect to the following corollary.

%%%%%%%%%%%%
% COROLLARY: cor:permutationEquivarianceConv
%%%%%%%%%%%%
%
\begin{corollary} \label{cor:permutationEquivarianceConv}
Let $x, \hhatx$ be two spherical signals, and $H$ be a spherical convolutional filter. Then, it holds that
% eq:permutationEquivarianceDistance
\begin{equation} \label{eq:permutationEquivarianceDistance}
\| x - \hhatx \|_{\SO3} = 0 \ \Rightarrow \ \| H(x) - H(\hhatx) \|_{\SO3} = 0.
\end{equation}
\end{corollary}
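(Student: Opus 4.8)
The plan is to derive this almost immediately from the rotation equivariance of spherical convolutions (Prop.~\ref{prop:rotationEquivarianceConv}) combined with the linearity of the filter. First I would unfold the hypothesis using the definition of the rotation distance (Def.~\ref{def:rotationDistance}): $\| x - \hhatx \|_{\SO3} = 0$ means $\inf_{r \in \SO3} \| x - \hhatx_{r} \| = 0$. The goal, likewise unfolded, is to show $\inf_{r \in \SO3} \| H(x) - (H(\hhatx))_{r} \| = 0$. The bridge between the two is the identity $(H(\hhatx))_{r} = H(\hhatx_{r})$, which is exactly \eqref{eq:rotationEquivarianceConv} applied to $\hhatx$ in place of $x$.

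The cleanest route is to show the infimum in the hypothesis is attained. Since $\SO3$ is compact and $r \mapsto \hhatx_{r}$ is continuous in the normalized spherical norm (Def.~\ref{def:rotationNorm}), the map $r \mapsto \| x - \hhatx_{r} \|$ is continuous on a compact set and hence attains its infimum at some $r^{\ast} \in \SO3$. Then $\| x - \hhatx_{r^{\ast}} \| = 0$, and since the norm is positive definite, $x = \hhatx_{r^{\ast}}$ as elements of $\mbL^{2}(\mbS_{2})$. Applying the filter and invoking equivariance gives $H(x) = H(\hhatx_{r^{\ast}}) = (H(\hhatx))_{r^{\ast}}$, so $\| H(x) - (H(\hhatx))_{r^{\ast}} \| = 0$. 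Taking the infimum over all rotations yields $\| H(x) - H(\hhatx) \|_{\SO3} \le \| H(x) - (H(\hhatx))_{r^{\ast}} \| = 0$, which is the claim.

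The only step requiring care — the main obstacle — is the passage through the infimum, i.e.\ justifying either that it is attained (compactness of $\SO3$ together with continuity of $r \mapsto \| x - \hhatx_{r}\|$) or, alternatively, replacing attainment by a limiting argument. In the latter I would take a sequence $r_{n}$ with $\| x - \hhatx_{r_{n}} \| \to 0$ and, using linearity and equivariance, write $\| H(x) - (H(\hhatx))_{r_{n}} \| = \| H(x - \hhatx_{r_{n}}) \| \le \| H \| \, \| x - \hhatx_{r_{n}} \| \to 0$, which additionally requires the filter to be a bounded operator. This sequence version has the advantage of producing the stronger quantitative bound $\| H(x) - H(\hhatx) \|_{\SO3} \le \| H \| \, \| x - \hhatx \|_{\SO3}$, of which the corollary is the special case where the right-hand side vanishes; the algebraic heart of the argument (equivariance together with linearity) is otherwise entirely routine.
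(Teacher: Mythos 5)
Your proof is correct, and its main route --- produce a single rotation $r^{\ast}$ identifying $x$ with $\hhatx_{r^{\ast}}$, push it through the equivariance identity $(H(\hhatx))_{r^{\ast}} = H(\hhatx_{r^{\ast}})$ of Prop.~\ref{prop:rotationEquivarianceConv}, then upper-bound the infimum defining $\| H(x) - H(\hhatx) \|_{\SO3}$ by that particular rotation --- is exactly the paper's argument. Where you genuinely go beyond the paper is the step it silently assumes: its proof opens with ``from $\| x - \hhatx \|_{\SO3} = 0$ we know that there exists a rotation $r$ such that $\hhatx = x_{r}$,'' i.e., it takes attainment of the infimum for granted, whereas you justify it via compactness of $\SO3$ and continuity of $r \mapsto \hhatx_{r}$ in norm (a standard density argument; one small caveat is that the normalized norm of Def.~\ref{def:rotationNorm} integrates against $d\theta_u\, d\phi_u$ without the $\sin\theta_u$ factor, so continuity of the rotation action should strictly be checked for that measure rather than quoted for the invariant one --- though the paper itself treats this norm as rotation invariant in the proof of Thm.~\ref{thm:stabilityConv}). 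Your alternative sequence argument is a genuinely different route: it dispenses with attainment entirely, but in exchange requires $H$ to be a bounded operator, which the corollary's bare hypothesis ``a filter'' does not grant --- the paper establishes $\| H(x) \| \le C_h \| x \|$ only for Lipschitz filters (Lemma~\ref{LemmaFilterBound}); what it buys is the stronger quantitative estimate $\| H(x) - H(\hhatx) \|_{\SO3} \le \| H \| \, \| x - \hhatx \|_{\SO3}$, a Lipschitz property with respect to the rotation distance of which the corollary is the degenerate case. Since the attainment route needs neither linearity nor boundedness, it matches the corollary's minimal hypotheses more faithfully; the sequence route is the one that generalizes.
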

%
%%%%%%%%%
%% Proof:
%%%%
%
\begin{proof}
See \ref{proof:CorollaryEquivarianceCov}.
\end{proof}
%
%% End of Proof
%%%%%%%%%%%%%%%%
%
%% End of COROLLARY
%%%
\noindent Note that the rotation distance [cf. \eqref{eq:rotationDistance}] applies to the filter output since the latter is also a spherical signal.

%%%%%%%%%%
%% REMARK: rmk:groupConv
%%%%%%%%%%
%
\begin{remark}[Group distance]\label{rmk:groupNorm} \normalfont
    Definition \ref{def:rotationDistance} can be extended to any group. In particular, let $\mathsf{G} = \mathsf{G}(\mathbb{X}, \circ)$ be a group with elements $g \in \mathsf{G}$ defined over space $\mathbb{X}$, and $x, \hat{x}: \mathbb{X} \to \mathbb{R}$ be two group signals mapping $\mathbb{X}$ to the real line $\mathbb{R}$. The group distance with respect to $\mathsf{G}$ is defined as
    \begin{equation} \label{eq:groupDistance}
    \| x-\hhatx \|_{\mathsf{G}} = \inf_{g \in \mathsf{G}} \| x - \hhatx_{g} \|
    \end{equation}
    where $\hhatx_{g}(u) = \hhatx(g (u))$ is the signal operated by $g$ and $\| \cdot \|$ can be any valid norm. The rotation distance is a particular case of this definition for $\mathsf{G} = \SO3$.
\end{remark}
%
%% End of REMARK
%%%

%%%%%%%%%%%%%%
% SUBSECTION : Stability to Diffeomorphism Perturbations
%   subsec:stabilityConv
%%%%%%%%%%%%%%

\subsection{Stability to diffeomorphism perturbations} \label{subsec:stabilityConv}

With above preliminaries in place, we formally characterize the stability of Spherical CNNs to general structure perturbations under the rotation distance (Def.~\ref{def:rotationDistance}). In other words, we shift focus on arbitrary structure perturbations that are different from but close to rotation operations. 

%In a general case, we are interested in perturbations of the signal that are small when measured with the rotation distance (Def.~\ref{def:rotationDistance}). In other words, we want the processing architecture to yield similar results when the signals are close to being rotated versions of each other.

% i.e. $r_{u} (u_{0})=u$, $\tau (u)$ be the resulting point after the perturbation $\tau$ is applied to $u$

We consider the general structure perturbation $\tau: \mbS_{2} \to \mbS_{2}$ that displaces each point $u \in \mbS_2$ to another point $\tau(u) \in \mbS_2$. Such a perturbation can be represented as a set of local rotations, i.e., the rotation whose characterization changes depending on the specific point $u \in \mbS_2$ on which it is applied to. Let $r_{u} \in \SO3$ be the rotation operation that maps the point $u_{0} = (0, 0) \in \mbS_{2}$ to the point $u \in \mbS_{2}$ along the shortest arc, i.e., $r_u(u_0)=u$, and similarly $r_{\tau (u)}$ be the rotation operation that maps the point $u_0$ to the perturbed point $\tau(u)$ along the shortest arc, i.e., $r_{\tau (u)} (u_{0}) = \tau (u)$. Combining the notions of $r_{u}$ and $r_{\tau (u)}$, we can formally define the general structure perturbation as the diffeomorphism perturbation in the following definition.

%%%%%%%%%%
%% FIGURE: fig:diffeomorphism
%%%%%%%%%%
%
\begin{figure}%
    \centering
    \begin{subfigure}{0.49\columnwidth}
        \centering
    \includegraphics[width=0.6\columnwidth]{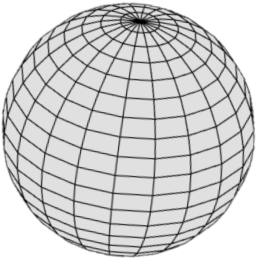}%
    \caption{Unperturbed space}%
    \label{subfig:unperturbedSpace}%
    \end{subfigure}
\hfill
    \begin{subfigure}{0.49\columnwidth}
        \centering
    \includegraphics[width=0.6\columnwidth]{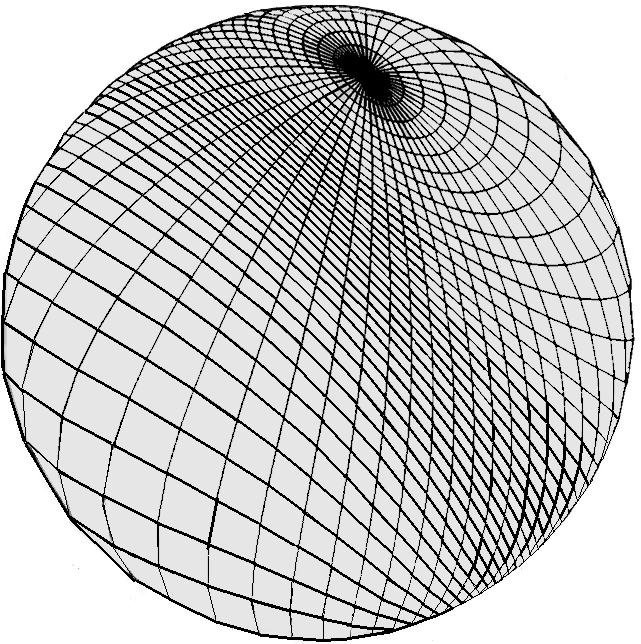}%
    \caption{Perturbed space}%
    \label{subfig:perturbedSpace}%
    \end{subfigure}%
    \caption{Diffeomorphism perturbation on the sphere [cf. Def~\ref{def:rotationDiffeomorphism}]. (a) Unperturbed sphere. (b) Perturbed sphere after a diffeomorphism perturbation.}
    \label{fig:diffeomorphism}
\end{figure}
%
%% End of FIGURE
%%%

%%%%%%%%%%%%%
% DEFINITION: def:rotationDiffeomorphism
%%%%%%%%%%%%%
%
\begin{definition}[Diffeomorphism perturbation] \label{def:rotationDiffeomorphism}
    Let $\tau : \mbS_{2} \to \mbS_{2}$ be a diffeomorphism, i.e., a bijective differentiable function whose inverse $\tau^{-1} : \mbS_{2} \to \mbS_{2}$ is differentiable as well. The diffeomorphism is equivalent to a set of local rotations $\{\tau_{u}\}_{u \in \mbS_2}$, where each local rotation $\tau_u \in \SO3$ is defined as
    % eq:rotationDiffeomorphism
    \begin{equation} \label{eq:rotationDiffeomorphism}
        \tau_{u} = r_{\tau (u)} \circ r_{u}^{-1},~ \forall~ u \in \mbS_2,
    \end{equation}
that maps the point $u$ to the perturbed point $\tau(u)$, i.e., $\tau(u) = \tau_u(u)$ for all $u \in \mbS_2$. The diffeomorphism perturbation is defined as the set of local rotations in \eqref{eq:rotationDiffeomorphism} that satisfies $\|\tau\| < \infty$ and $\|\nabla \tau\| < \infty$ for
    \begin{equation} \label{eq:rotationDiffeomorphismDeviation}
    \|\tau \| := \max_{u \in \mbS_2} \Big\{ \beta^{\tau_{u}} \Big\},
    \end{equation}
    where $\beta^{\tau_{u}}$ is the rotation angle of the local rotation $\tau_{u}$ [cf. \eqref{eq:rotationDiffeomorphism}] \footnote{The rotation angle $\beta^{\tau_{u}}$ indicates the maximal geodesic distance that can be induced by the local rotation $\tau_u$ on the unit sphere.}; and for
    \begin{equation} \label{eq:rotationDiffeomorphismGradient}
    \| \nabla \tau \| = \max_{(\theta_u, \phi_u) \in [0,\pi] \times [0,2\pi)}\bigg\{ \Big| \frac{\partial \tau^{\theta}}{\partial \theta_u} \Big|, \Big|\frac{\partial \tau^{\phi}}{\partial \phi_u} \Big| \bigg\}.
    \end{equation}
    where $\tau^{\theta}$ and $\tau^{\phi}$ are polar and azimuthal angle displacements induced by $\tau$ such that $\tau^{\theta}(\theta_u, \phi_u) = \tau_u^\theta(\theta_u, \phi_u)$ and $\tau^{\phi}(\theta_u, \phi_u) = \tau_u^\phi(\theta_u, \phi_u)$ for all $u \in \mbS_2$. The signal resulting from a diffeomorphism perturbation to the spherical structure of a given signal $x$ is denoted by $x_{\tau}$ such that
    % eq:perturbedSignal
    \begin{equation} \label{eq:perturbedSignal}
    x_{\tau}(u) = x\big(\tau (u)\big) = x\big(\tau_u (u)\big)\ ,~\forall~u \in \mbS_{2}.
    \end{equation}
\end{definition}

The diffeomorphism perturbation (Def.~\ref{def:rotationDiffeomorphism}) is, essentially, a set of local rotations where rotation axes and rotation angles depend on specific points on the sphere. We remark that since rotations can displace a point $u \in \mbS_2$ to any other point on the sphere, this representation can model any structure perturbation in the spherical space. Under the rotation distance [cf. \eqref{eq:rotationDistance}], we have
\begin{equation} \label{eq:perturbationModuleRotation}
    \| x - x_\tau \|_\SO3 = \inf_{r \in \SO3} \| x - x_{\tau \circ r} \|.
\end{equation}
Thus, if we want to measure the \emph{size of the preturbation}, we need to do so modulo rotations. Define the closest rotation $r^*$ to the diffeomorphism $\tau$ as
    \begin{equation} \label{eq:closestRotation}
    r^* = \argmin_{r \in \SO3} \max \{ \| \tau \circ r^{-1} \|, \| \nabla ( \tau \circ r^{-1} ) \| \}
    \end{equation}
where $\tau \circ r^{-1}$ is also a diffeomorphism but modulo the rotation $r$. The \emph{perturbation size} modulo rotations is then measured by [cf. \eqref{eq:rotationDiffeomorphismDeviation}, \eqref{eq:rotationDiffeomorphismGradient}]
    \begin{equation} \label{eq:rotationSize}
    \| \tau \|_\SO3 = \big\| \tau \circ (r^{*})^{-1} \big\|\ , \ \| \nabla \tau \|_\SO3 = \big\| \nabla \big(\tau \circ (r^{*})^{-1}\big) \big\|.
    \end{equation}
In this way, \eqref{eq:rotationSize} links the perturbation size to the rotation distance [cf. \eqref{eq:rotationDistance}] and determines how far from a rotation the actual perturbation $\tau$ is. Note that if the diffeomorphism perturbation is the rotation operation, all local rotations are same and $\| \tau \|_\SO3=\| \nabla \tau \|_\SO3\!=\!0$. Likewise, $\| \tau \|_\SO3=\| \nabla \tau \|_\SO3\!=\!0$ implies that the diffeomorphism perturbation is the rotation, i.e., $\tau \in \SO3$. An example of the diffeomorphism perturbation is illustrated in Fig.~\ref{fig:diffeomorphism}.

We have now defined the diffeomorphism perturbations of the spherical signal, as those that modify the spherical space $\mbS_{2}$ by using local rotations, i.e. rotations whose axes and angles vary as specific points in the spherical space change [cf. Def.~\ref{def:rotationDiffeomorphism}]. Next, we restrict our attention to the family of \emph{Lipschitz} filters.
%%%%%%%%%%%%%
% DEFINITION: def:Lipschitz
%%%%%%%%%%%%%
%
\begin{definition}[Lipschitz filter] \label{def:Lipschitz}
    A spherical convolutional filter $h:\mbS_{2} \to \reals$ is \emph{Lipschitz}, if there exists a constant $C_h$ such that
    % eq:Lipschitz
    \begin{equation} \label{eq:Lipschitz}
        |h(u)| \leq C_{h} \ , \ \frac{|h(u_1) - h(u_2)|}{\text{arc} ( u_1, u_2 )} \leq C_{h}
    \end{equation}
    for all $u,u_1,u_2 \in \mbS_2$, where $\text{arc}(u_1, u_2)$ is the length of the shortest arc between $u_1$ and $u_2$ on the sphere. When we consider multi-feature signals, we say a filter $H$ [cf. \eqref{eq:filterBankConv}] is Lipschitz if each of the $FG$ single-feature filters $h^{fg}$ satisfies \eqref{eq:Lipschitz}.
\end{definition}
%
%% End of DEFINITION
%%%

We can now proceed to characterize the stability of spherical convolutional filters to diffeomorphism perturbations [cf. Def.~\ref{def:rotationDiffeomorphism}] in the input spherical signal.

%%%%%%%%%%
% THEOREM: thm:stabilityConv
%%%%%%%%%%
%
\begin{theorem}[Stability of spherical filters] \label{thm:stabilityConv}
Let $x \in\mathbb{L}^2(\mbS_2) $ be a spherical signal, and $H$ be a Lipschitz filter with constant $C_h$ [cf. Def~\ref{def:Lipschitz}]. Consider the diffeomorphism perturbation $\tau$ [cf. Def~\ref{def:rotationDiffeomorphism}] that satisfies
% eq:perturbationSizeCondition
\begin{equation} \label{eq:perturbationSizeCondition}
    \| \tau \|_\SO3 \le \epsilon \ , \ \| \nabla \tau \|_\SO3 \le \epsilon \le \frac{1}{2}.
\end{equation}
Then, for the perturbed input spherical signal $x_{\tau}$ [cf. \eqref{eq:perturbedSignal}], it holds that
% eq:stabilityConv
\begin{equation}\label{eq:stabilityConv}
    \| H(x) - H(x_{\tau})\|_{\SO3} \leq 8 C_h \epsilon \| x \| + \ccalO(\epsilon^{2})
\end{equation}
\end{theorem}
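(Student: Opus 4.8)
The plan is to bound the output distance by a plain (un-minimized) norm, transfer the diffeomorphism from the signal onto the filter by a change of variables, and then split the resulting kernel discrepancy into a piece controlled by the Lipschitz constant through $\|\tau\|_{\SO3}$ and a piece controlled by the Jacobian through $\|\nabla\tau\|_{\SO3}$. First I would reduce to the case in which $\tau$ is close to the identity rather than merely close to some rotation. Writing $\tau = r^{*}\circ\tilde\tau$ with $r^{*}$ the closest rotation [cf. \eqref{eq:closestRotation}] and $\tilde\tau = (r^{*})^{-1}\circ\tau$, the definitions in \eqref{eq:rotationSize} give $\|\tilde\tau\|\le\epsilon$ and $\|\nabla\tilde\tau\|\le\epsilon$, and one checks $x_{\tau} = (x_{r^{*}})_{\tilde\tau}$. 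Since $H$ is rotation equivariant (Prop.~\ref{prop:rotationEquivarianceConv}) and the output distance $\|\cdot\|_{\SO3}$ is itself measured modulo rotations, this lets me replace $\tau$ by $\tilde\tau$ and $x$ by $x_{r^{*}}$, so it suffices to prove the estimate assuming $\|\tau\|\le\epsilon$, $\|\nabla\tau\|\le\epsilon\le 1/2$ directly, after which $\|H(x)-H(x_\tau)\|_{\SO3}\le\|H(x)-H(x_\tau)\|$ by evaluating the infimum in the rotation distance at the identity.

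Next I would write both $H(x)$ and $H(x_\tau)$ in the Euler-parametrized form \eqref{eq:sphericalConvRectangle}, where the signal enters only through $x(\theta_r,\phi_r)$ and $x_\tau(\theta_r,\phi_r)=x(\tau\circ(\theta_r,\phi_r))$, independently of $\rho_r$. In the perturbed integral I perform the change of variables $(\theta_r,\phi_r)\mapsto\tau\circ(\theta_r,\phi_r)$, which turns $x_\tau$ back into $x$ evaluated at the integration point and moves the perturbation onto the filter and the measure: the filter is now evaluated at the rotation $\tilde r$ whose polar/azimuth angles are displaced by $\tau^{-1}$, and the weight $\sin\theta_r$ is multiplied by $|\det D\tau^{-1}|$. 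This is exactly why the theorem needs only $x\in\mbL^2(\mbS_2)$ and not differentiability of $x$: all the smoothness is borne by the Lipschitz filter. Subtracting, the difference $H(x)(u)-H(x_\tau)(u)$ becomes the integral of $x(\theta_r,\phi_r)$ against a kernel that I split as a filter-difference term $[h(r^{-1}\circ u)-h(\tilde r^{-1}\circ u)]\sin\theta_r$ plus a measure-difference term $h(\tilde r^{-1}\circ u)[\sin\theta_r-\sin\theta'\,|\det D\tau^{-1}|]$.

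Then I would bound the two terms. For the filter-difference term, the arguments $r^{-1}\circ u$ and $\tilde r^{-1}\circ u$ are the same point displaced by a local rotation of angle at most $\|\tau\|\le\epsilon$, so their Euclidean separation is at most $2\sin(\epsilon/2)\le\epsilon$, and the Lipschitz bound \eqref{eq:Lipschitz} contributes a factor $C_h\epsilon$. For the measure-difference term, $\theta'=\theta_r+\ccalO(\epsilon)$ and, using $\|\nabla\tau\|\le\epsilon\le 1/2$ to keep $D\tau^{-1}=(I+\nabla\tau)^{-1}$ nondegenerate and to expand it, $|\det D\tau^{-1}|=1+\ccalO(\epsilon)$, so this term is $\ccalO(C_h\epsilon)$ via the uniform bound $|h|\le C_h$. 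Finally I apply Cauchy--Schwarz in $(\theta_r,\phi_r)$ to factor out $\|x\|$ against the $L^2$ mass of the weight, integrate out $\rho_r$, and use that the normalized norm \eqref{eq:rotationNorm} converts a uniform-in-$u$ pointwise bound into a bound on $\|H(x)-H(x_\tau)\|$; collecting the two first-order contributions together with the $\tfrac{1}{8\pi^2}$ versus $\tfrac{1}{2\pi^2}$ normalization yields the constant $8C_h$, with the expansion remainders absorbed into $\ccalO(\epsilon^2)$.

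The main obstacle I expect is bookkeeping the change of variables on the curved sphere---tracking the $\sin\theta$ weight, the Jacobian of $\tau^{-1}$, and the displacement of $r^{-1}\circ u$---tightly enough to separate cleanly the $\|\tau\|_{\SO3}$ contribution (from the Lipschitz step) from the $\|\nabla\tau\|_{\SO3}$ contribution (from the measure step), and to land on the exact constant $8$ rather than merely some unspecified $C(\epsilon)$. The reduction modulo rotation also warrants care, since the normalized spherical norm \eqref{eq:rotationNorm} is not itself rotation invariant.
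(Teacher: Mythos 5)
Your proposal is correct and follows essentially the same route as the paper's proof: reduce modulo rotations via the equivariance of $H$ (Prop.~\ref{prop:rotationEquivarianceConv}) and the closest rotation $r^*$ of \eqref{eq:closestRotation}, change variables to transfer the diffeomorphism onto the convolution kernel, split the kernel discrepancy into a Lipschitz/arc-displacement piece bounded by $C_h\|\tau\|_{\SO3}$ and a Jacobian-plus-$\sin$ piece bounded via $\|\nabla\tau\|_{\SO3}$, and close with a Schur-type kernel estimate (your Cauchy--Schwarz step is exactly the proof of Schur's test the paper invokes). The only cosmetic difference is bookkeeping---the paper factorizes $HP^*_\tau - H = \left(H - H{P^*_\tau}^{-1}\right)P^*_\tau$ and pays $\|P^*_\tau\|\le 2$ separately to land on $2\times 4 = 8$, while you fold the inverse Jacobian into a single kernel comparison---and your closing caveat is apt, since the paper simply asserts rotation invariance of the flat-measure norm in \eqref{eq:proofthm12}, which for the norm of Def.~\ref{def:rotationNorm} (lacking the $\sin\theta_u$ weight) holds only up to exactly the kind of Jacobian correction your argument already tracks.
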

%
%%%%%%%%%
%% Proof:
%%%%
%
\begin{proof}
See \ref{proof:theoremStabilityCov}.
\end{proof}
%
%% End of Proof
%%%%%%%%%%%%%%%%
%
%% End of THEOREM
%%%
Theorem \ref{thm:stabilityConv} establishes that spherical convolution filters are Lipschitz stable to diffeomorphism perturbations (modulo rotations). More specifically, the output difference between filtering $x$ and a perturbed version $x_{\tau}$ of it, is bounded linearly by the perturbation size $\epsilon$ with respect to the stability constant $8 C_{h}$. When the diffeomorphism perturbation is close to the rotation operation (i.e., $\epsilon \to 0$), the bound approaches zero and the spherical convolutional filter maintains performance. The stability constant implies the role of the filter property. A careful design can reduce the Lipschitz constant $C_{h}$ and thus, lead to more stable filters. The constant term $8$ depends on the domain, which can be improved by further restricting the class of filters. Note that Theorem \ref{thm:stabilityConv} reduces to Corollary~\ref{cor:permutationEquivarianceConv} if the diffeomorphism perturbation $\tau$ reduces to the rotation operation $r$, i.e. $\| \tau \|_\SO3=\| \nabla \tau \|_\SO3=0$.

In summary, Theorem~\ref{thm:stabilityConv} demonstrates that spherical convolutional filters yield similar (stable) outputs when signals are close to being rotated versions of each other. The result provides the stability analysis for arbitrary structure perturbations in spherical signals, where the rotation equivariance is a special case when particularizing structure perturbations as rotation operations. In other words, we extend the rotation equivariance to general stability property.

%%%%%%%%%%%%%%%%%%%%%%%%%%%%%%%%%%%%%%%%%%%%%%%%%%%%%%%%%%%%%%%%%%%%%%%%
%%% SECTION : Stability of Spherical Convolutional Neural Networks   %%%
%%%%%%%%%%%%%%%%%%%%%%%%%%%%%%%%%%%%%%%%%%%%%%%%%%%%%%%%%%%%%%%%%%%%%%%%

\section{Stability of Spherical Convolutional Neural Networks} \label{sec:stabilitySCNN}

%!TEX root = 00-mainStabilitySO3.tex

%%%%%%%%%%%%%%%%%%%%%%%%%%%%%
%%% SECTION : Stability   %%%
%%%%%%%%%%%%%%%%%%%%%%%%%%%%%

Spherical CNNs $\Phi(x;\ccalH)$ [cf. \eqref{eq:singleFeatureSCNN}] are nonlinear processing architectures consisting of spherical convolutional filters and pointwise nonlinearities, which have the potential to learn nonlinear representations from spherical signals. Based on the stability result of spherical convolutional filters, we characterize the effect of structure perturbations on the output of Spherical CNNs, to illustrate how they exploit the rotational structure inherent in spherical signals.

%Spherical CNNs are capable of exploiting the data structure because they have the properties of rotation equivariance (Sec.~\ref{subsec:rotationEquivarianceSCNN}) and stability to rotation diffeomorphisms (Sec.~\ref{subsec:stabilitySCNN}) which they inherit from the constitutive spherical convolutional filters.

%%%%%%%%%%%%%%
% SUBSECTION : Rotation equivariance of SCNNs
%   subsec:rotationEquivarianceSCNN
%%%%%%%%%%%%%%

\subsection{Rotation equivariance of Spherical CNNs} \label{subsec:rotationEquivarianceSCNN}

The fact that spherical convolutional filters are rotation equivariant (Prop.~\ref{prop:rotationEquivarianceConv}) extends immediately to Spherical CNNs. This is because of the pointwise nature of the nonlinearity, which does not affect the data structure.

%%%%%%%%%%%%%%
% PROPOSITION: prop:rotationEquivarianceSCNN
%%%%%%%%%%%%%%
%
\begin{proposition} \label{PropositionEquivarianceSCNN}
    Let $x$ be a spherical signal, $\Phi(\cdot;\ccalH)$ be a Spherical CNN [cf. \eqref{eq:singleFeatureSCNN}], and $r \in \SO3$ be a rotation. Denote by $x_{r}(u) = x(r (u))~\forall~u \in \mbS_2$ the rotated version of the input spherical signal $x$ by $r$. Then, it holds that
	% eq:rotationEquivarianceSCNN
	\begin{equation} \label{eq:PropositionEquivarianceSCNN}
		\| \Phi(x;\ccalH) - \Phi(x_{r};\ccalH) \|_{\SO3} = 0.
	\end{equation}
\end{proposition}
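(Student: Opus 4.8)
The plan is to establish the stronger pointwise identity $\Phi(x_r;\ccalH) = (\Phi(x;\ccalH))_r$ — namely, that processing the rotated input produces exactly the rotation of the original output — and then to observe that the rotation distance between any signal and a rotated copy of itself is zero. The backbone is an induction over the $L$ layers of the Spherical CNN [cf. \eqref{eq:SCNN}], combining the filter-level equivariance of Prop.~\ref{prop:rotationEquivarianceConv} with the fact that pointwise nonlinearities commute with the rotation operation.

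The first ingredient I would record is a commutation lemma: for any spherical signal $z$ and any pointwise map $\sigma$, one has $\sigma[z_r] = (\sigma[z])_r$. This is immediate from the pointwise action, since evaluating at any $u \in \mbS_2$ gives $\sigma(z_r(u)) = \sigma(z(r \circ u)) = (\sigma[z])(r \circ u)$; the rotation can be freely moved across $\sigma$ precisely because $\sigma$ never mixes values at distinct points of $\mbS_2$.

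With this in hand, let $\{x_\ell\}$ and $\{x_\ell'\}$ denote the layer signals generated by the inputs $x$ and $x_r$, respectively. I would prove $x_\ell' = (x_\ell)_r$ by induction on $\ell$. The base case is $x_0' = x_r = (x_0)_r$. For the inductive step I assume $x_{\ell-1}' = (x_{\ell-1})_r$, invoke Prop.~\ref{prop:rotationEquivarianceConv} to obtain $H_\ell\big((x_{\ell-1})_r\big) = (H_\ell(x_{\ell-1}))_r$, and then use the commutation lemma to push the rotation past $\sigma_\ell$, so that
\[
x_\ell' = \sigma_\ell\big[H_\ell(x_{\ell-1}')\big] = \sigma_\ell\big[(H_\ell(x_{\ell-1}))_r\big] = \big(\sigma_\ell[H_\ell(x_{\ell-1})]\big)_r = (x_\ell)_r.
\]
Evaluating at $\ell = L$ yields $\Phi(x_r;\ccalH) = (\Phi(x;\ccalH))_r$.

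To close, write $y = \Phi(x;\ccalH)$. By the definition of the rotation distance [cf. \eqref{eq:rotationDistance}], taking $r^{-1}$ as the competing rotation in the infimum gives $(y_r)_{r^{-1}} = y_{r \circ r^{-1}} = y$, hence $\| y - (y_r)_{r^{-1}} \| = 0$ and therefore $\| \Phi(x;\ccalH) - \Phi(x_r;\ccalH) \|_{\SO3} = \| y - y_r \|_{\SO3} = 0$. I expect the only genuine subtlety to be the commutation lemma: one must check carefully that the nonlinearity truly commutes with the rotation, which hinges entirely on $\sigma_\ell$ acting pointwise. Once that is secured, the remainder is a mechanical induction resting on the already-established filter equivariance.
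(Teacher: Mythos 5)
Your proof is correct and follows essentially the same route as the paper's: an induction over layers combining the filter equivariance of Prop.~\ref{prop:rotationEquivarianceConv} with the observation that the pointwise nonlinearity commutes with rotation, then converting the exact identity $\Phi(x_r;\ccalH) = (\Phi(x;\ccalH))_r$ into a zero rotation distance by picking $r^{-1}$ in the infimum. The only cosmetic difference is that the paper spells out the multi-feature case, explicitly noting that the sum $\sum_{g} H_\ell^{fg}(x_{\ell-1}^g)$ preserves equivariance by linearity, a step your use of the overloaded operator $H_\ell$ subsumes implicitly.
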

%
%%%%%%%%%
%% Proof:
%%%%
%
\begin{proof}
See \ref{proof:propositionEquivarianceSCNN}.
\end{proof}
%
%% End of Proof
%%%%%%%%%%%%%%%%

Proposition \ref{PropositionEquivarianceSCNN} demonstrates that the output of Spherical CNN applied on the rotated spherical signal $y= \Phi(x_{r};\ccalH)$ is the corresponding rotated output of Spherical CNN applied on the unperturbed spherical signal $y_{r}(u) \!=\! y(r(u))$ for all $u \in \mbS_2$. This implies that Spherical CNNs capture the same information, irrespective of the specific rotated version of spherical signal. Proposition \ref{PropositionEquivarianceSCNN} suggests that Spherical CNNs generalize well since they can learn how to process all rotated versions of a given signal, by learning how to process one of them.

%%%%%%%%%%%%%%
% SUBSECTION : Stability to rotation diffeomorphisms of SCNNs
%   subsec:stabilitySCNN
%%%%%%%%%%%%%%

\subsection{Stability to diffeomorphism perturbations of Spherical CNNs} \label{subsec:stabilitySCNN}

We proceed to consider general structure perturbations instead of fundamental rotation operations. The stability of Spherical CNNs to diffeomorphism perturbations (Def.~\ref{def:rotationDiffeomorphism}) is inherited from that of spherical convolutional filters. The inclusion of nonlinearity in the architecture affects the stability constant. In particular, we consider the Lipschitz nonlinearity as defined next.

%%%%%%%%%%%%%
% DEFINITION: def:nonlinearityLipschitz
%%%%%%%%%%%%%
%
\begin{definition}[Lipschitz nonlinearity] \label{def:lipschitzNonlinearity}
A nonlinearity $\sigma:\reals  \to \reals$ satisfying $\sigma(0)=0$ is Lipschitz if there exists a constant $C_\sigma >0$ such that
\begin{gather}\label{eq:lipschitzNonlinearity}
| \sigma(a) - \sigma(b) | \le C_\sigma | a - b |, ~ \forall~ a, b \in \mathbb{R}.
\end{gather}
\end{definition}
%
%% End of DEFINITION
%%%

When Spherical CNNs are built from Lipschitz filters (Def.~\ref{def:Lipschitz}) and use Lipschitz nonlinearities (Def.~\ref{def:lipschitzNonlinearity}), they inherit the stability to diffeomorphism perturbations (Def.~\ref{def:rotationDiffeomorphism}) from spherical convolutional filters [cf. Theorem~\ref{thm:stabilityConv}].

%%%%%%%%%%
% THEOREM: thm:stabilitySCNN
%%%%%%%%%%
%
\begin{theorem}[Stability of Spherical CNNs] \label{thm:stabilitySCNN}
    Let $x \in\mathbb{L}^2(\mbS_2) $ be a spherical signal, and $\Phi(\cdot;\ccalH)$ be a Spherical CNN [cf. \eqref{eq:singleFeatureSCNN}] consisting of $L$ layers, with $F_{\ell} \!=\! F$ features per layer, built with Lipschitz filters with constant $C_h$ [cf. Def~\ref{def:Lipschitz}] and Lipschitz nonlinearities with constant $C_\sigma$ [cf. Def.~\ref{def:lipschitzNonlinearity}]. Consider a diffeomorphism perturbation $\tau$ [cf. Def~\ref{def:rotationDiffeomorphism}] that satisfies
    % eq:perturbationSizeConditionSCNN
    \begin{equation} \label{eq:perturbationSizeConditionSCNN}
    \| \tau \|_\SO3 \le \epsilon \ , \ \| \nabla \tau \|_\SO3 \le \epsilon \le \frac{1}{2}.
    \end{equation}
    Then, for the perturbed input spherical signal $x_{\tau}$ [cf. \eqref{eq:perturbedSignal}], it holds that
    % eq:stabilitySCNN
    \begin{equation}\label{eq:stabilitySCNN}
    \| \Phi(x;\ccalH) - \Phi(x_{\tau};\ccalH)\|_{\SO3} \leq 8 C_{h}^{L} C_{\sigma}^{L} F^{L-1}\epsilon \| x \|+ \ccalO(\epsilon^{2}).
    \end{equation}
\end{theorem}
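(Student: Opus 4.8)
The plan is to propagate the perturbation error layer by layer, tracking the quantity $d_\ell := \| x_\ell - \tilde{x}_\ell \|_{\SO3}$, where $x_\ell$ denotes the output of layer $\ell$ when the clean signal $x_0 = x$ is processed and $\tilde{x}_\ell$ denotes the output of layer $\ell$ when the perturbed signal $\tilde{x}_0 = x_\tau$ is processed [cf. \eqref{eq:SCNN}]. The final quantity of interest is $d_L$. The whole argument rests on two facts: that each layer map $x_{\ell-1} \mapsto \sigma_\ell[H_\ell(x_{\ell-1})]$ is Lipschitz continuous with respect to the rotation distance $\|\cdot\|_{\SO3}$, and that the perturbation is injected only once, at the input, so that Theorem~\ref{thm:stabilityConv} need only be invoked at the first layer.

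First I would establish that both constituents of a layer are Lipschitz in the rotation distance. For the nonlinearity, since $\sigma_\ell$ acts pointwise it commutes with rotations, i.e. $(\sigma_\ell[z])_r = \sigma_\ell[z_r]$; combined with $\sigma_\ell(0)=0$ and Def.~\ref{def:lipschitzNonlinearity}, this yields $\| \sigma_\ell[a] - \sigma_\ell[b] \|_{\SO3} \le C_\sigma \| a - b \|_{\SO3}$, the infimum over $r$ defining the rotation distance passing through the pointwise bound unchanged. For the filter, linearity together with the rotation equivariance of $H_\ell$ (Prop.~\ref{prop:rotationEquivarianceConv}) lets me move the rotation inside the argument: $\inf_r \| H_\ell(a) - (H_\ell(b))_r \| = \inf_r \| H_\ell(a - b_r) \|$. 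The bound $|h^{fg}(u)| \le C_h$ of Def.~\ref{def:Lipschitz} and the unit mass of the normalized Haar measure \eqref{eq:normalizedMeasureRotation} give $\| h^{fg} \ast_{\SO3} z \| \le C_h \| z \|$ per single filter; combined with the linearity/equivariance step and summing the filter-bank aggregation \eqref{eq:filterBankConv}-\eqref{eq:filterBankAgg} over the $F$ features, this yields $\| H_\ell(a) - H_\ell(b) \|_{\SO3} \le C_h F \| a - b \|_{\SO3}$. Hence each layer map is Lipschitz with constant $C_\sigma C_h F$ in the rotation distance.

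With these two facts I would set up the recursion. At the first layer, $\tilde{x}_0 = x_\tau$ is exactly a rotation diffeomorphism of $x_0 = x$, so Theorem~\ref{thm:stabilityConv} applies verbatim and, after passing through the $C_\sigma$-Lipschitz nonlinearity, gives $d_1 \le 8 C_\sigma C_h \epsilon \| x \| + \ccalO(\epsilon^2)$. For every subsequent layer $\ell \ge 2$, the perturbed signal $\tilde{x}_{\ell-1}$ is no longer a clean diffeomorphism of $x_{\ell-1}$, so instead of re-invoking the stability theorem I would simply use the layer Lipschitz bound to transport the already-small error, obtaining $d_\ell \le C_\sigma C_h F \, d_{\ell-1}$. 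Unrolling the recursion gives $d_L \le (C_\sigma C_h F)^{L-1} d_1$, and substituting the base case yields the claimed $8 (C_\sigma C_h)^L F^{L-1} \epsilon \| x \| + \ccalO(\epsilon^2)$ after collecting the feature factors to $F^{L-1}$ with the standard input/output dimension bookkeeping.

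The main obstacle is conceptual rather than computational: one is tempted to apply Theorem~\ref{thm:stabilityConv} at every layer, but this is invalid because $\tilde{x}_{\ell-1}$ for $\ell \ge 2$ is the result of passing a diffeomorphism-perturbed signal through the earlier nonlinear layers, and is therefore not itself of the form (clean intermediate signal) composed with a rotation diffeomorphism. The resolution—and the crux of the proof—is recognizing that the diffeomorphism enters the cascade exactly once, so it suffices to pay the $\ccalO(\epsilon)$ stability cost a single time at the first layer and then merely transport that error forward through the bounded, Lipschitz layers; this is precisely what prevents any uncontrolled $\| x - x_\tau \|_{\SO3}$ (which need not be $\ccalO(\epsilon)$ for a rough signal) from entering the estimate. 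The remaining care is routine: verifying that the $\ccalO(\epsilon^2)$ remainder stays second order under the linear propagation, and tracking the feature dimensions through the filter-bank aggregation to land on $F^{L-1}$ rather than $F^{L}$.
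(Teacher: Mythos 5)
Your proposal is correct and takes essentially the same route as the paper's own proof: the paper likewise pays the $8 C_h \epsilon \|x\|$ cost of Theorem~\ref{thm:stabilityConv} exactly once, at the first layer, and then transports the error forward via the per-filter bound $\|H(z)\| \le C_h \|z\|$ (its Lemma~\ref{LemmaFilterBound}), the $C_\sigma$-Lipschitz nonlinearity, and the sums over $F$ features, yielding precisely your recursion $d_\ell \le C_\sigma C_h F\, d_{\ell-1}$ and the constant $8(C_\sigma C_h)^L F^{L-1}$. If anything, your explicit check that each layer is Lipschitz with respect to $\|\cdot\|_{\SO3}$ (via the commutation of $\sigma$ and $H$ with rotations) is a touch more careful than the paper, which runs the recursion in the plain norm and switches to the rotation distance only at the base case.
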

%
%%%%%%%%%
%% Proof:
%%%%
%
\begin{proof}
See \ref{proof:stabilitySCNN}.
\end{proof}
%
%% End of Proof
%%%%%%%%%%%%%%%%
%
%% End of THEOREM
%%%

Theorem \ref{thm:stabilitySCNN} determines that Spherical CNNs are Lipschitz stable to structure perturbations. More specifically, the output difference of Spherical CNN induced by diffeomorphism perturbations (modulo rotations) is upper bounded by a term that depends proportionally on the size of the perturbation $\epsilon$. When the perturbation size $\epsilon \to 0$, diffeomorphism perturbations approach rotation operations, the bound reduces to zero, and Spherical CNNs maintain performance. In other words, if two signals are close to being rotated versions of each other, their outputs of the Spherical CNN will be close as well. 

While the stability bound may have a similar form as that of spherical convolutional filters [Thm.~\ref{thm:stabilityConv}], the stability constant accounts for the effects of different architecture components. In particular, it is the product of three terms:

\smallskip
\begin{enumerate}[(1)]

\item The first term $8C_h^L$ captures the property of spherical convolutional filters. A larger Lipschitz constant $C_h$ [cf. \eqref{eq:Lipschitz}] allows larger input-output expansivity through filtering propagation and more variability in filter values between nearby spherical points, leading to less stability to structure perturbations that displace spherical points arbitrarily. Reducing $C_h$ yields spherical convolutional filters that are less expansive and change more slowly, resulting in the improved stability. However, this improvement comes at expenses of the information loss and the expressive power. The former is because less expansive filters propagate less information from the input to the output, and the latter is because filters become more flat in the spherical surface reducing the representative power. 

\item The term $C_\sigma^L$ captures the role of the nonlinearity. In particular, $C_\sigma$ is typically one indicating the non-expansivity of the nonlinearity, such as the absolute value, the ReLU, the Tanh, etc.

\item The last term $F^{L-1}$ represents the impact of the Spherical CNN architecture, namely, the number of layers ($L$) and the number of spherical convolutional filters per layer ($F$). A wider architecture with more features and a deeper architecture with more layers yields a looser bound and degrades the stability. This can be explained by the fact that more filters are involved in the Spherical CNN, which amplifies the effect of structure perturbations passing through the architecture.

\end{enumerate}

Overall, Spherical CNNs inherit the stability to structure perturbations from spherical convolutional filters. This explains how Spherical CNNs exploit the underlying structure present in $3$D data and how they maintain performance under arbitrary perturbations close to rotations.

\begin{remark}\label{rmk:groupNote}\normalfont
The results can be extended to any generic group $\mathsf{G}$ besides the rotation group $\SO3$. I.e., the stability to general diffeomorphism perturbations applies to the group convolutional filter [cf. \eqref{eq:groupConv}]; hence, applies to the group convolutional neural network. Proofs of propositions and theorems can be carried out with similar processes, while the key step may be to find a suitable mathematical representation for the data structure of interest to perform theoretical analysis. In the rotation group $\SO3$, for instance, the spherical coordinate system is utilized to describe spherical signals and the Euler parameterization is used to characterize the rotation operation. 

%equivariance to group operations and the stability to general group diffeomorphisms apply to the group convolution [cf. \eqref{eq:groupConv}]; hence, apply to the group convolutional neural network. Proofs of propositions and theorems can be carried out with similar processes, while the key step may be to find a suitable mathematical representation for the data structure of interest to perform theoretical analysis. In the rotation group $\SO3$, for instance, the spherical coordinate system is utilized to describe spherical signals and the Euler parameterization is used to characterize the rotation operation. 
\end{remark}

%%%%%%%%%%%%%%%%%%%%%%%%%%%%%%%
%%% SECTION : Simulations   %%%
%%%%%%%%%%%%%%%%%%%%%%%%%%%%%%%

\section{Numerical Experiments} \label{sec:sims}

%!TEX root = 00-mainStabilitySO3.tex

%%%%%%%%%%%%%%%%%%%%%%%%%%%%%%%
%%% SECTION : Simulations   %%%
%%%%%%%%%%%%%%%%%%%%%%%%%%%%%%%

%%%%%%%%%%
%% FIGURE: fig:exampleChair
%%%%%%%%%%
%
\begin{figure}[t]
    \centering
    \includegraphics[height=0.12\textheight, keepaspectratio]{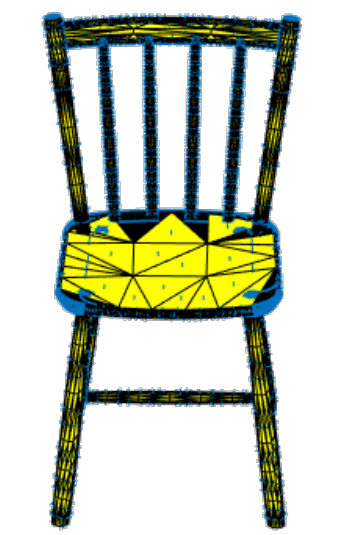}
    \qquad \qquad \quad
    \includegraphics[height=0.12\textheight, keepaspectratio]{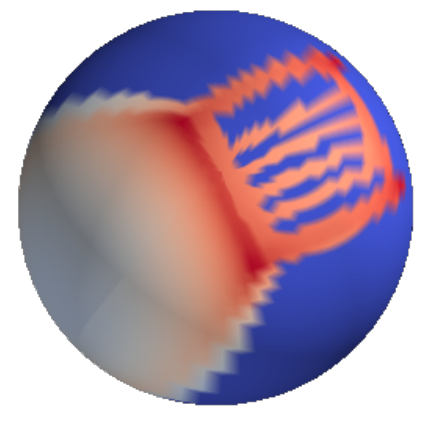}
    \caption{3D model (left) and spherical signal (right) of a chair. }
    \label{3Dchair}
\end{figure}
%
%% End of FIGURE
%%%

\begin{figure*}%
\centering
\begin{subfigure}{0.3\columnwidth}
    \centering
\includegraphics[height=0.12\textheight]{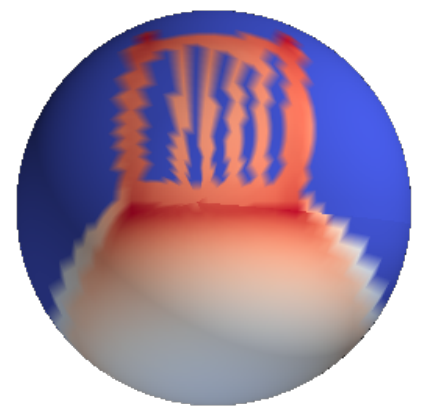}%
\centering
\caption{}%
\label{subfiga_rotation}%
\end{subfigure}\hfill%\hfill
\begin{subfigure}{0.3\columnwidth}
    \centering
\includegraphics[height=0.12\textheight]{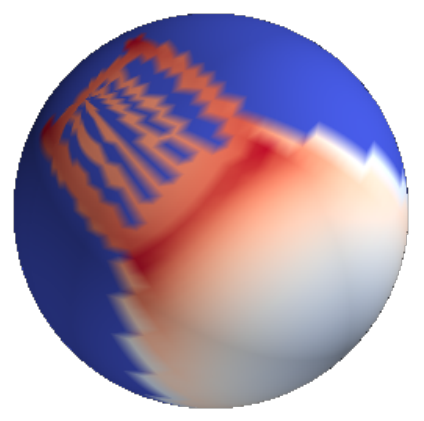}%
\caption{}%
\label{subfigb_rotation}%
\end{subfigure}\hfill%
\begin{subfigure}{0.3\columnwidth}
    \centering
\includegraphics[height=0.12\textheight]{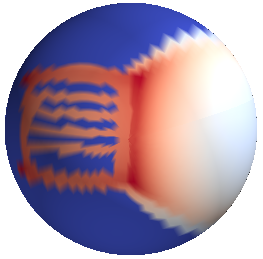}%
\caption{}%
\label{subfigc_rotation}%
\end{subfigure}%
\caption{Rotation operations on the spherical signal. \subref{subfiga_rotation} $45$ degree rotation. \subref{subfigb_rotation} $90$ degree rotation. \subref{subfigc_rotation} $135$ degree rotation. }\label{3Dchairrotation}\vspace{-2mm}
\end{figure*}

We have proved that Spherical CNNs are stable to general structure perturbations (i.e., diffeomorphism perturbations). This illustrates how Spherical CNNs exploit the data structure to improve representative power and learning capacity. As a matter of fact, they have already been shown successful in classification tasks \cite{esteves2018learning}. Thus, in what follows, we focus solely on corroborating the stability property by numerical experiments with admissible perturbations; for performance comparison with other learning methods, please refer to \cite{esteves2018learning}.

\myparagraph{Problem setting and dataset.} The shape classification problem of 3D object is considered on ModelNet40 dataset \cite{wu20153d}, i.e., given a spherical signal, the goal is to find out which class its represented object belongs to. There are 40 classes in the dataset, where we use $9683$ samples for training and $29595$ samples for testing. We parametrize the spherical signal in a $64 \times 64$ resolution---see Fig. \ref{3Dchair} for the 3D model of a chair and the associated spherical signal.

\begin{table}
\begin{center}
\caption{Test classification accuracy of the Spherical CNN for the original data, $45$ degree rotated data, $90$ degree rotated data, and $135$ degree rotated data. Root mean square error (RMSE) of the Spherical CNN output features for the $45$ degree rotated data, $90$ degree rotated data and $135$ degree rotated data.}
\label{table:rotationEquivariance}
\begin{tabular}{|c|c|c|}
\hline
Rotation angle & Classification accuracy & RMSE \\ \hline
\specialrule{0em}{0.1pt}{0.1pt}
$0$ degree & $0.864$ & 0 \\ \hline
\specialrule{0em}{0.6pt}{0.6pt}
$45$ degree & $0.864$ & $1.67 \cdot 10^{-3}$ \\  \hline
$90$ degree & $0.864$ & $1.58 \cdot 10^{-3}$ \\ \hline
$135$ degree & $0.864$ & $2.67\cdot 10^{-3}$ \\
\hline
\end{tabular}
\end{center}  \vspace{-4mm}
\end{table}

\myparagraph{Architectures and training.} We consider the Spherical CNN of $8$ layers, each containing $16$, $16$, $32$, $32$, $64$, $64$, $128$ and $128$ spherical convolutional filters and the ReLU nonlinearity. At the readout layer, we apply a global weighted average pooling for a descriptor vector and the latter is projected into the number of object classes. We train the architecture for $50$ epochs with the ADAM optimizer and a batch size of $16$ samples. The learning rate is $1\cdot 10^{-3}$, which is divided by $5$ on epochs $30$ and $40$ respectively. We start by 
considering structure perturbations as fundamental rotation operations, and proceed to arbitrary diffeomorphism perturbations.

\myparagraph{Rotation equivariance.} Given a trained Spherical CNN, we consider the testing data perturbed by $45$ degree, $90$ degree and $135$ degree rotation operations, respectively. Fig. \ref{3Dchairrotation} displays the resulting spherical signals under three rotations, and Table \ref{table:rotationEquivariance} shows the classification accuracies over the original and rotated testing data. We see that the classification accuracies remain the same, evidencing that the Spherical CNN is rotation equivariant. We also show the root mean squared error (RMSE) between output features of the Spherical CNN when applied to the original signal and the rotated signal. The results indicate that they are virtually the same output features, as expected in theory. %However, note that the values are not exactly zero. This is because while spherical convolutions and nonlinearities preserve the rotation equivariance, minor errors are introduced by the pooling in practice.

\begin{figure*}%
\centering
\begin{subfigure}{0.25\columnwidth}
    \centering
\includegraphics[width=0.71\linewidth, height = 0.65\linewidth]{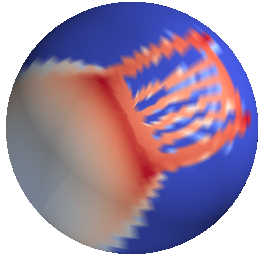}%
\caption{}%
\label{subfiga_diffeomorphism}%
\end{subfigure}\hfill%
\begin{subfigure}{0.25\columnwidth}
    \centering
\includegraphics[height = 0.65\linewidth]{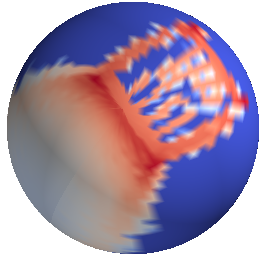}%
\caption{}%
\label{subfigb_diffeomorphism}%
\end{subfigure}\hfill%\hfill
\begin{subfigure}{0.25\columnwidth}
    \centering
\includegraphics[height = 0.65\linewidth]{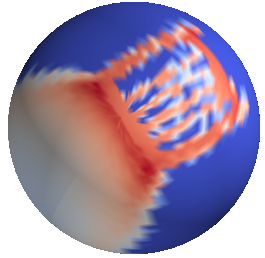}%
\caption{}%
\label{subfigc_diffeomorphism}%
\end{subfigure}\hfill%\hfill%\hfill%\hfill
\begin{subfigure}{0.25\columnwidth}
    \centering
\includegraphics[height = 0.65\linewidth]{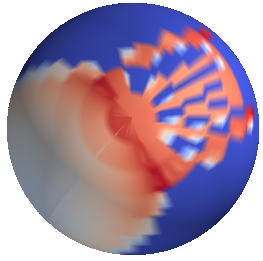}%
\caption{}%
\label{subfigd_diffeomorphism}%
\end{subfigure}%
\caption{Diffeomorphism perturbations on the spherical signal. \subref{subfiga_diffeomorphism} Type 1. \subref{subfigb_diffeomorphism} Tyep 2. \subref{subfigc_diffeomorphism} Type 3. \subref{subfigd_diffeomorphism} Type 4. }\label{3Dchairdiffeomorphism}\vspace{-2mm}
\end{figure*}

\myparagraph{Diffeomorphism perturbations.} We now test the output change of the Spherical CNN when the input signal is subject to four types of diffeomoprhism perturbations. Namely, we carry out different local rotations along the latitude, indicating different perturbation severity. For type $1$, we rotate every other sampled point at each latitude with a random degree drawn form $[-3, 3]$, where we assume the clockwise direction as the positive direction. Type $2$ rotates every other sampled point with a random degree drawn from $[-6, 6]$. Type $3$ considers the local rotation at each sampled point with a random degree drawn from $[-3, 3]$. Finally, type $4$ perturbs blocks of $3$ samples at each latitude separately, rotating the second point to the third point and interpolating the values of the remaining sampled point, where the maximal degree change is approximately $6$ degrees. Fig. \ref{3Dchairdiffeomorphism} displays perturbed spherical signals that are associated to these four types of diffeomorphism perturbations.

\begin{table}
\begin{center}
\caption{Test classification accuracy of the Spherical CNN for diffeomorphism perturbation types 1-4. Root mean square error (RMSE) of the Spherical CNN output features for diffeomorphism perturbation types 1-4.}
\label{table3}
\begin{tabular}{|c|c|c|}
\hline
Diffeomorphism perturbation & Classification accuracy & RMSE \\ \hline
Type $1$ & $0.863$ & $0.0668$ \\ \hline
Type $2$ & $0.856$ & $0.0872$ \\ \hline
Type $3$ & $0.858$ & $0.0778$ \\ \hline
Type $4$ & $0.828$ & $0.1254$ \\
\hline
\end{tabular}
\end{center}  \vspace{-4mm}
\end{table}

\myparagraph{Diffeomorphism stability.} Table \ref{table3} shows the classification accuracies of the trained Spherical CNN when assuming that testing spherical signals are perturbed by aforementioned four types of diffeomorphism perturbations. In general, the Spherical CNN exhibits strong robustness in four cases, as expected from Theorem~\ref{thm:stabilitySCNN}. We see that type $1$ has little effect on the classification accuracy, while types $2$ and $3$ slightly decrease the accuracy due to the increase of the maximal degree change and the increased number of perturbed sampled points. Type $4$ is most severe as observed from Fig. \ref{subfigd_diffeomorphism}, while the Spherical CNN only suffers from $0.036$ accuracy loss, emphasizing its stability to diffeomorphism perturbations. Table \ref{table3} also shows the RMSE of the output features at final layer under above diffeomorphism perturbations. The RMSE maintains low values in all cases, which validates the stability of the Spherical CNN to diffeomorphism perturbations. %However, we note that the Spherical CNN performs worse under Diffeomorphism 2 than Diffeomorphism 1 while having a less relative MSE. This can be explained by the effect of the final global pooling.    

%%%%%%%%%%%%%%%%%%%%%%%%%%%%%%%
%%% SECTION : Conclusions   %%%
%%%%%%%%%%%%%%%%%%%%%%%%%%%%%%%

\section{Conclusions} \label{sec:conclusions}

%!TEX root = 00-mainStabilitySO3.tex

%%%%%%%%%%%%%%%%%%%%%%%%%%%%%%%
%%% SECTION : Conclusions   %%%
%%%%%%%%%%%%%%%%%%%%%%%%%%%%%%%

This paper investigated the stability of Spherical CNNs to general structure perturbations in spherical signals. We first considered spherical signals perturbed by fundamental rotation operations, and proved explicitly the output of the Spherical CNN applied on the rotated signal is the rotated output of the Spherical CNN applied on the unperturbed signal. This property, referred to as the rotation equivariance, implies the same information is obtained irrespective of rotation operations and thus, we defined the rotation distance for stability analysis modulo rotations. We then considered spherical signals perturbed by arbitrary structure perturbations, and modelled the latter as diffeomorphism perturbations. We showed the output difference of Spherical CNN induced by the diffeomorphism perturbation is upper bounded proportionally by the perturbation size under the rotation distance. This result indicates that Spherical CNNs maintain performance when structure perturbations are close to rotation operations. These theoretical findings also show the explicit role of the filter property, nonlinearity, architecture width and depth on the stability of Spherical CNNs, which identifies handle to improve the robustness. The theory was corroborated through numerical experiments of 3D object classification.

%%%%%%%%%%%%%%%%%%
%%% APPENDIX   %%%
%%%%%%%%%%%%%%%%%%

\appendix %\label{sec:appendix}

%!TEX root = 00-mainStabilitySO3.tex

%%%%%%%%%%%%%%%%%%
%%% APPENDIX   %%%
%%%%%%%%%%%%%%%%%%

%%%%%%%%%%%%%%%%%%%%%%%%%%%%%%%%%%%%%%%%%%%%%%%%%%%%%%%%%%%%
%%% SECTION : Permutation Equivariance of Graph Filters  %%%
%%%%%%%%%%%%%%%%%%%%%%%%%%%%%%%%%%%%%%%%%%%%%%%%%%%%%%%%%%%%

\section{Proof of Proposition~\ref{prop:rotationEquivarianceConv}} \label{proof:PropositionEquivarianceCov}

\begin{proof}
Given the rotated spherical signal $x_r$, the output of the spherical convolutional filter $H$ is given by
\begin{align} \label{eq:prprop1}
H(x_r)(u)&=\int_{\SO3} h\big(\hat{r}^{-1} (u) \big)x\Big(\!r \big(\hat{r}(u_0)\big)\!\Big)d\hat{r}\\
&= \int_{\SO3} \! h\Big(\! \big(r \circ \hat{r} \big)^{-1} \big(r (u) \big)\!\Big) x\Big(\!(r \circ \hat{r})(u_0)\!\Big)d\hat{r},\nonumber
\end{align}
where $u_0=(0,0)\in \mbS_2$ is given by $\theta_{u_0} = \phi_{u_0} = 0$ and $r(u)\in \mbS_2$ is the rotated point of $u$ by $r$. By defining $r' = r \circ \hat{r}$, we can rewrite \eqref{eq:prprop1} as
\begin{equation} \label{eq:prprop2}
\begin{split}
H(x_r)(u)&=\int_{\SO3} h\Big({r'}^{-1} \big( r (u) \big)\Big)x\big(r'(u_0)\big)dr'\\
&=h\ast_\SO3 x \big(r (u) \big) =H(x)(r (u) )
\end{split}
\end{equation}
Note that \eqref{eq:prprop2} holds for any point $u \in \SO3$, and thus it holds for the spherical signal $x$. By letting $y = H(x)$, we complete the proof thta $H(x_r) = y_r$.
%\begin{equation} \label{eq:prprop3}
%\begin{split}
%H(x_r) = y_r.
%\end{split}
%\end{equation}
\end{proof}

%%%%%%%%%%%%%%%%%%%%%%%%%%%%%%%%%%%%%%%%%%%%%%%%%%%

\section{Proof of Corollary~\ref{cor:permutationEquivarianceConv}} \label{proof:CorollaryEquivarianceCov}

\begin{proof}
Given $\| x - \hhatx \|_{\SO3} = 0$ under the rotation distance [Def. \ref{def:rotationNorm}], there exists a rotation $\hat{r}$ such that $\hhatx = x_{\hat{r}}$ is a rotated version of signal $x$. Denote by $y \!=\! H(x)$ the output of spherical convolutional filter and we have $H(x_{\hat{r}}) = y_{\hat{r}}$ resulting from Proposition \ref{prop:rotationEquivarianceConv}. Under the rotation distance, we get
\begin{align}
\| H \left( x \right) - H(\hhatx) \|_{\SO3} &= \inf_{r\in \SO3} \| y - y_{\hat{r} \circ r} \| \le\! \| y - y_{{\hat{r}} \circ {\hat{r}}^{-1} } \| \!=\! 0.
\end{align}
Since $\| H \left( x \right) - H(\hhatx) \|_{\SO3} \ge 0$ by the norm definition, we can then obtain $\| H ( x ) - H(\hhatx) \|_{\SO3} = 0$ completing the proof.
\end{proof}

%%%%%%%%%%%%%%%%%%%%%%%%%%%%%%%%%%%%%%%%%%%%%%%%%%%

\section{Proof of Theorem~\ref{thm:stabilityConv}} \label{proof:theoremStabilityCov}

\begin{proof}

Under the rotation distance, the output difference of the spherical convolutional filter induced by the diffeomorphism perturbation $\tau$ is
\begin{align}\label{eq:proofthm11}
\| H(x_\tau) &\!-\! H(x) \|_{\SO3} = \inf_{r \in \SO3} \|H( x_\tau ) \!-\! \left(H(x)\right)_r \| \\
& = \inf_{r \in \SO3} \|H( x_\tau ) \!-\! H( x_r ) \|  \le \|H( x_\tau ) \!-\! H(x_{r^*}) \| \nonumber
\end{align}
where $\left(H(x)\right)_r$ is the rotated version of the filter output $H(x)$ by $r$, and where the rotation equivariance of the spherical convolutional filter is used and $r^* \!\in \SO3$ is the closest rotation to $\tau$ defined as $r^* = \argmin_{r \in \SO3} \max \{ \| \tau \!\circ r^{-1} \|, \| \nabla (\tau \circ r^{-1}) \|\}$ [cf. \eqref{eq:closestRotation}]. Since the normalized spherical norm is rotation invariant, i.e., $\| x_r \| = \| x \|$ for all $r \in \SO3$, we have
\begin{align}\label{eq:proofthm12}
&\| H(x_\tau) \!-\! H(x_{r^*}) \| = \|\big(H( x_\tau )\big)_{({r^*})^{-1}} \!-\! \big(H(x_{r^*})\big)_{({r^*})^{-1}} \| \\
&= \| H( x_{\tau \circ ({r^*})^{-1} } ) \!-\! H(x_{ r^* \circ ({r^*})^{-1} }) \| = \| H( x_{\tau \circ ({r^*})^{-1}} ) - H( x) \|\nonumber
\end{align}
where $\big(H(x)\big)_{({r^*})^{-1}}$ is the rotated version of the filter output $H( x )$ by $({r^*})^{-1}$. Let $\tau^* = \tau \circ ({r^*})^{-1}$ be the diffeomorphism module the rotation $r^*$. Let also $Hx = H(x)$ and $P_{\tau}^*x = x_{\tau^*}$ be operator notations. By substituting \eqref{eq:proofthm12} into \eqref{eq:proofthm11}, we get
\begin{equation}\label{eq:proofthm13}
\| H( x_{\tau} ) - H( x) \|_\SO3 \le \| H P^*_\tau x - Hx \|  = \| ( H P^*_\tau - H )x \|.
\end{equation}
We consider the operator $HP^*_\tau - H$ and factorize it as
\begin{gather}\label{eq:proofthm14}
HP^*_\tau - H = \left( H - H {P^*_\tau}^{-1} \right)P^*_\tau.
\end{gather}
where ${P^*_\tau}^{-1}$ is the inverse operator of $P^*_\tau$, which exists since $\tau^*$ is a diffeomorphism. The triangle inequality allows
\begin{equation}\label{eq:prthm15}
\| HP^*_\tau\! -\! H \| \! =\! \|\big(\! H \!-\! H {P^*_\tau}^{-1} \!\big)P^*_\tau \|\!\le\! \|P^*_\tau \|\| H - H {P^*_\tau}^{-1} \|.
\end{equation}
Let us analyze two terms in the bound of \eqref{eq:prthm15} separately. 

For the first term $\| P^*_\tau \|$, a known procedure to bound the norm of an operator $P^*_\tau$ is to find an upper bound of $\| P^*_\tau x \|$ as $\| P^*_\tau x \|\le A \|x \|$ for any spherical signal $x$. Then, $A$ is the upper bound of $\| P^*_\tau \|$. Following this intuition, we have
\begin{align}\label{eq:prthm16}
&\| P^*_\tau x \|^2 \!=\! \frac{1}{2\pi^2} \int P^*_\tau x \left(\theta_u,\phi_u\right)^2 d\theta_u d\phi_u \\
& \!=\! \frac{1}{2\pi^2}\int\! x\! \left(\theta_u \!+\! \tau^{*\theta}(\phi_u,\theta_u), \phi_u \!+\!\tau^{*\phi}(\phi_u,\theta_u) \right)^2 d\theta_u d\phi_u \nonumber
\end{align}
where $\tau^{*\theta}$ and $\tau^{*\phi}$ are polar and azimuth angle displacements induced by $\tau^*$. Denote by $\hat{\theta}_u = \theta_u + \tau^{*\theta}(\phi_u,\theta_u)$ and $\hat{\phi}_u = \phi_u +\tau^{*\phi}(\phi_u,\theta_u)$ the variable substitutions, and observe that
\begin{align}\label{eq:prthm17}
d\hat{\theta}_u d\hat{\phi}_u &\!=\! \left( 1 \!+\! \frac{\partial \tau^{*\theta}}{\partial \theta_u}(\theta_u, \phi_u) \right)\!\left( 1 \!+\! \frac{\partial \tau^{*\phi}}{\partial \phi_u}(\theta_u, \phi_u) \right)d\theta_u d\phi_u \nonumber \\
& \ge \big(1- \| \nabla \tau \|_\SO3 \big)^2 d\theta_u d\phi_u,
\end{align}
where the last inequality is due to the definition of $\| \nabla \tau \|_\SO3$ [cf. \eqref{eq:rotationSize}]. Substituting \eqref{eq:prthm17} into \eqref{eq:prthm16} yields
\begin{align}\label{eq:prthm18}
\| P^*_\tau x \|^2 \!&\le \frac{1}{2\pi^2}\! \int x (\hat{\theta}_u, \hat{\phi}_u)^2 (1- \| \nabla \tau \|_\SO3)^{-2} d\hat{\theta}_u d\hat{\phi}_u \nonumber \\
&= (1- \| \nabla \tau \|_\SO3)^{-2} \| x \|^2 \le 4 \| x \|^2,
\end{align}
where $\| \nabla \tau \|_\SO3 \le 1/2$ is used in the last inequality. Thus, we have $\| P^*_\tau \| \!\le\! 2$. 

For the second term $\| H \!-\! H {P^*_\tau}^{-1} \|$, let ${P^*_\tau}^{-1}x(u)=x\left( \xi (u) \right)$ where $\xi$ is the inverse operation of $\tau^*$, and we have
\begin{align} \label{eq:prthm19}
&H x(u) = \int_{\SO3}\!h (r^{-1}(u) )x\big(r(u_0)\big)dr,\\
\label{eq:prthm110}&H {P^*_\tau}^{-1} x (u) \!=\!\! \int_{\SO3}\!\!\!\!\!\!\! h\big(r^{-1}(u) \big)x\Big(\xi \big(r (u_0) \big) \Big) dr. 
\end{align}
where $u_0 = (0,0)$ is the point given by $\theta_u = \phi_u =0$. For \eqref{eq:prthm19}, we can rewrite it by parametrizing rotations with the Euler angles [cf. \eqref{eq:rotationEuler} and \eqref{eq:sphericalConvRectangle}] as
\begin{align} \label{eq:prthm111}
H x (u) &=\frac{1}{8\pi^2}\! \int\! h \left(r^{-1}_{\phi_r\theta_r\rho_r} (u)\right) x(\theta_r, \phi_r) \sin(\theta_r) d\theta_r d\phi_r d\rho_r \nonumber\\
& = \frac{1}{8\pi^2}\! \int\! h \Big(  (r_{u_r} \circ r^z_{\rho_r})^{-1} (u)\Big) x(\theta_r, \phi_r) \sin(\theta_r) d\theta_r d\phi_r d\rho_r
\end{align}
%and $r^{-1}_{\phi_r\theta_r\rho_r}$ is its inverse rotation
where $r_{\phi_r\theta_r\rho_r}$ is the rotation characterized by the Euler angles $(\phi_r,\theta_r,\rho_r)$ and $r^{-1}_{\phi_r\theta_r\rho_r}$ is its inverse rotation, and where $r_{u_r}$ is the rotation that rotates the point $u_0=(0,0)$ to the point $u_r=(\theta_r, \phi_r)$ along the shortest arc, i.e., $r_{u_r}(u_0) = u_r$, $r^z_{\rho_r}$ is the rotation along $z$-axis by $\rho_r$ angles, and $(r_{u_r} \circ r^z_{\rho_r})^{-1} = {r^z_{\rho_r}}^{-1} \circ r_{u_r}^{-1}$ is its inverse rotation. For \eqref{eq:prthm110}, by letting $\xi_{r} = \xi \circ r$, we have
\begin{align} \label{eq:prthm112}
&H {P^*_\tau}^{-1} x (u) = \int_{\SO3} h \big( r^{-\!1} (u) \big)x\big(\xi_{r} (u_0)\big) dr.
\end{align}
Let $\xi_{r} (u_0) = u_r = (\theta_r, \phi_r)$ be the polar angle displacement and the azimuth angle displacement of $\xi_{r}$ at $u_0$. Since $\xi^{-1} = \tau^*$, we get
\begin{align} \label{eq:prthm113}
r (u_0) \!=\! \xi^{-1}\big(\xi_{r} (u_0)\big) \!=\! \tau^* (u_r) \!=\! \big(\theta_r \!+\! \tau^{*\theta}(\theta_r,\phi_r), \phi_r \!+\! \tau^{*\phi}(\theta_r,\phi_r)\big)
\end{align}
such that the normalized Haar measure is given by
\begin{equation}\label{eqprthm1Eulerangle}
    dr = \frac{1}{8\pi^2}\alpha(\theta_r, \phi_r) \sin\!\big(\theta_r \!+\! \tau^{*\theta}(\theta_r,\phi_r)\big)d\phi_{r}d\theta_{r}d \rho_{r},
\end{equation}
with $\alpha(\theta_r, \phi_r)=\left(\!1\!+\!\frac{\partial \tau^{*\theta}(\theta_r, \phi_r)}{\partial \theta_r}\! \right) \left(\!1\!+\!\frac{\partial \tau^{*\phi}(\theta_r, \phi_r)}{\partial \phi_r} \!\right)$. By parametrizing rotations with the Euler angles [cf. \eqref{eq:rotationEuler} and \eqref{eq:sphericalConvRectangle}] and using the results in \eqref{eq:prthm113} and \eqref{eqprthm1Eulerangle}, we can similarly rewrite \eqref{eq:prthm112} as
\begin{align} \label{eq:prthm114}
H {P^*_\tau}^{-1} x (u) =& \frac{1}{8\pi^2}\!\int\! x(\theta_r, \phi_r)\alpha(\theta_r, \phi_r)\sin\!\big(\theta_r \!+\! \tau^{*\theta}(\theta_r,\phi_r)\big)\cdot\nonumber \\
& h \Big(\! (r_{\tau^* (u_r)} \circ r^z_{\rho_r})^{-1} (u) \!\Big) d\theta_r d\phi_r d\rho_r 
\end{align}
where $r_{\tau^* (u_r)}$ is the rotation that rotates the point $u_0=(0,0)$ to the point $\tau^* (u_r\!)\!=\!\big(\theta_r \!+\! \tau^{*\theta}(\theta_r,\phi_r), \phi_r \!+\! \tau^{*\phi}(\theta_r,\phi_r)\big)$ along the shortest arc, i.e., $r_{\tau^* (u_r)} (u_0) \!=\! \tau^*(u_r)$, $r^z_{\rho_r}$ is the rotation along $z$-axis by $\rho_r$ angles, and $(r_{\tau^* (u_r)} \circ r^z_{\rho_r})^{-1} \!\!=\! {r^z_{\rho_r}}^{-1} \circ r_{\tau^* (u_r)}^{-1}$ is its inverse rotation.

By substituting \eqref{eq:prthm111} and \eqref{eq:prthm114} into the operator $Hx - H {P_\tau^{*}}^{-1}x$, we get
\begin{align} \label{eq:prthm116}
Hx(u)- H {P^*_\tau}^{-1} x (u) &\!=\!\frac{1}{8\pi^2}\!\int\! x(\theta_r, \phi_r)\! \left[ h\Big({r^z_{\rho_r}}^{-1} \big( r_{u_r}^{-1} (u)\big) \Big)\sin(\theta_r) \right.\nonumber\\
&\left.\!\!\!\!- h\Big(\! {r^{z}_{\rho_r}}^{-\!1}\big( r_{\tau^* (u_r)}^{-1}(u)\big) \!\Big)\alpha(\theta_r,\! \phi_r\!)\sin\!\big(\theta_r \!\!+\!\! \tau^{*\theta}\!(\theta_r,\!\phi_r\!)\!\big)\! \right]\!\! d\theta_r d\phi_r d\rho_r \nonumber \\
&=\frac{1}{8\pi^2}\!\int\! k(\theta_r, \phi_r, \rho_r, \theta_u, \phi_u) x(\theta_r, \phi_r) d\theta_r d\phi_r d\rho_r
\end{align}
where $u \!=\! (\theta_u,\phi_u)$ in the spherical coordinate system and $k(\theta_r, \phi_r, \rho_r, \theta_u, \phi_u)$ is the kernel of the operator $H- H {P^*_\tau}^{-1}$. Recall the Schur's Lemma \cite{schur1911bemerkungen}. For the operator $K$ with kernel $k(\theta_r, \phi_r, \rho_r, \theta_u, \phi_u)$, if 
\begin{align}\label{eq:prthm117}
\frac{1}{8\pi^2}\int | k(\theta_r, \phi_r, \rho_r, \theta_u, \phi_u) | d\theta_r d\phi_r d\rho_r \le A,\\
\label{eq:prthm1175}\frac{1}{2 \pi^2} \int | k(\theta_r, \phi_r, \rho_r, \theta_u, \phi_u) | d\theta_u d\phi_u \le A,
\end{align}
we have $\| K \| \le A$. We proceed by using the Schur's Lemma.

We divide $k(\theta_r, \phi_r, \rho_r, \theta_u, \phi_u)$ into two sub-operators as
\begin{align}\label{eq:prthm118}
&k(\theta_r, \!\phi_r,\! \rho_r,\! \theta_u,\! \phi_u) \!=\! k_{1}(\theta_r,\! \phi_r, \!\rho_r,\! \theta_u,\! \phi_u)\!+\!k_{2}(\theta_r,\! \phi_r, \!\rho_r,\! \theta_u,\! \phi_u)\\
&=\big(1-\alpha(\theta_r, \phi_r) \big)h\Big({r^z_{\rho_r}}^{-1} \big( r_{u_r}^{-1} (u)\big) \Big)\sin(\theta_r) \nonumber \\
&+\! \alpha(\theta_r, \phi_r) \!\left[h\Big(\!{r^z_{\rho_r}}^{-\!1} \big( r_{u_r}^{-\!1} (u)\big) \!\Big)\!\sin(\theta_r) \!-\!h \!\left(\!  {r^{z}_{\rho_r}}^{-1} \big( r_{\tau^* (u_r)}^{-1} (u)\big)\! \right)\!\sin(\theta_r \!+\! \tau^{*\theta}(\theta_r,\phi_r))\!\right]\!\!,\nonumber
\end{align}
and analyze these two sub-operators separately. 

For the first sub-operator $k_{1}(\theta_r, \phi_r, \rho_r, \theta_u, \phi_u)$, since $|\partial \tau^{*\theta}(\theta_r, \phi_r)/\partial \theta_r| \!\!\le \| \nabla\tau \|_\SO3$ and $|\partial \tau^{*\phi}(\theta_r, \phi_r)/\partial \phi_r| \le \| \nabla\tau \|_\SO3$ [cf. \eqref{eq:rotationSize}], we have
 \begin{gather} \label{eq:prthm119}
|1-\alpha(\theta_r, \phi_r)| \le \left( 2+\| \nabla\tau \|_\SO3 \right) \| \nabla\tau \|_\SO3.
\end{gather}
By substituting \eqref{eq:prthm119} into $k_{1}(\theta_r, \phi_r, \rho_r, \theta_u, \phi_u)$, we obtain
\begin{align} \label{eq:prthm120}
&| k_{1}(\theta_r, \phi_r, \rho_r, \theta_u, \phi_u)| \!\le\! |1\!-\!\alpha(\theta_r, \phi_r)| \Big| h\Big({r^z_{\rho_r}}^{-1} \big( r_{u_r}^{-1} (u)\big) \Big) \Big| \nonumber\\
& \le C_h \left( 2+\| \nabla\tau \|_\SO3 \right) \| \nabla\tau \|_\SO3 \le 2 C_h \epsilon + \ccalO(\epsilon^2)
\end{align}
where the last inequality uses the conditions that $h$ is a Lipschitz filter with respect to $C_h$ and $\| \nabla\tau \|_\SO3 \le \epsilon$.

For the second sub-operator $k_{2}(\theta_r, \phi_r, \rho_r, \theta_u, \phi_u)$, we have
\begin{equation}\label{eq:prthm121}
\begin{split}
|\alpha(\theta_r, \phi_r)| \le 1 + 2\| \nabla\tau \|_\SO3 +\| \nabla\tau \|_\SO3^2.
\end{split}
\end{equation}
Then consider the reduction term 
\begin{align}
h\Big({r^z_{\rho_r}}^{-\!1} \big( r_{u_r}^{-\!1} (u)\big)\Big)\sin(\theta_r) - h \Big( r^{z~-1}_{\rho_r} \big( r_{\tau^* (u_r)}^{-1} (u)\big) \Big)\sin\!\big(\theta_r + \tau^{*\theta}(\theta_r,\phi_r)\big) \nonumber
\end{align}
which can be divided into two sub-terms as
\begin{align}\label{eq:prthm1213}
& \sin(\theta_r)\!\left[ h\Big({r^z_{\rho_r}}^{-1} \big( r_{u_r}^{-1} (u)\big) \!\Big) \!-\! h \!\left( r^{z~-1}_{\rho_r} \big( r_{\tau^* (u_r)}^{-1} ( u) \big) \right) \right]\\
& +\! h\! \left(r^{z~-1}_{\rho_r} \big( r_{\tau^* (u_r)}^{-1} (u) \big) \!\right) \!\left[ \sin(\theta_r) \!-\! \sin\!\big(\theta_r \!+\! \tau^{*\theta}(\theta_r,\phi_r)\big) \right] \nonumber.
\end{align}
For the first term in \eqref{eq:prthm1213}, it is determined by two points ${r^z_{\rho_r}}^{-1} \big( r_{u_r}^{-1} (u)\big)$ and $r^{z~-1}_{\rho_r} \big( r_{\tau^* (u_r)}^{-1} ( u)\big)$. We start by considering two points $r_{\tau^* (u_r)}^{-1} (u)$ and $r_{u_r}^{-1} (u)$. Due to the symmetry of the sphere, we can alternatively consider two points $r_{\tau^* (u_r)} (u)$ and $r_{u_r} (u)$ such that
\begin{equation}\label{eq:prthm1214}
\begin{split}
r_{\tau^* (u_r)} (u) &= r_{\tau^* (u_r)} \Big( r^{-1}_{u_r} \big( r_{u_r} (u) \big)\!\Big) \!=\! \tau^*_{u_r} \big( r_{u_r} (u) \big)
\end{split}
\end{equation}
where $\tau^*_{u_r} = r_{\tau^* (u_r)} \circ r_{u_r}^{-1}$ is the local rotation of $\tau^*$ at the point $u_r$ [cf. \eqref{eq:rotationDiffeomorphism}], whose rotation angle is bounded by $\| \tau \|_\SO3$ [cf. \eqref{eq:rotationSize}]. The shortest arc connecting these two points on the sphere is then bounded by
\begin{equation}\label{eq:prthm1215}
\begin{split}
\text{arc}\big(r_{u_r} (u),~ r_{\tau^* (u_r)} (u)\big) \le \| \tau \|_\SO3,~ \forall u \in \mbS_2
\end{split}
\end{equation}
since points are defined on the unit sphere. Thus, due to the symmetry of the sphere, we have $\text{arc}\big(r_{u_r}^{-1} (u),~ r^{-1}_{\tau^* (u_r)} (u) \big) \le \| \tau \|_\SO3$ for all $u \in \mbS_2$. Since $r^{z~-1}_{\rho_r}$ is the rotation along $z$-axis that does not change the distance between two points on the sphere, we get
\begin{equation}\label{eq:prthm12165}
\begin{split}
\text{arc}\Big(r^{z~-1}_{\rho_r}\big( r_{u_r}^{-1}(u)\big),~\! r^{z~-1}_{\rho_r} \big( r^{-1}_{\tau^* (u_r)} (u)\big) \!\Big) \!\le\! \| \tau \|_\SO3,~\forall~ u \in \mbS_2.
\end{split}
\end{equation}
With the Lipschitz property of filter $h$, we get
\begin{align} \label{eq:prthm1217}
&\big| \sin(\theta_r) \left[ h\big(r_{u_r}^{-1} (u) \big) \!-\! h \big( r^{-1}_{\tau^* (u_r)} (u) \big)\right] \big| \nonumber\\
&\le\! C_h ~\!\text{arc}\Big(r^{z-1}_{\rho_r}\big( r^{-1}_{\phi_r\theta_r\rho_r} (u)\big),~\! r^{z-1}_{\rho_r}\big( r^{-1}_{\tau^* (u_r)} (u)\big) \!\Big) \!\le C_h \| \tau \|_\SO3.
\end{align}
For the second term in \eqref{eq:prthm1213}, by using the truncated Taylor Expansion, we can represent $\sin\big(\theta_r \!+\! \tau^{*\theta}(\theta_r,\phi_r)\big)$ as
\begin{align} \label{eq:prthm122}
&\sin\!\big(\theta_r \!+\! \tau^{*\theta}(\theta_r,\phi_r)\big) \!=\! \sin(\theta_r) \!+\! \cos\!\big(\theta_r+ t \tau^{*\theta}(\theta_r,\phi_r)\big) \tau^{*\theta}(\theta_r,\phi_r)
\end{align}
with some $t \in (0,1)$. Now denote by $u_1 = u_r = (\theta_r, \phi_r)$ and $u_2 = \tau^* (u_1)\!\!=\big(\theta_r+ \tau^{*\theta}(\theta_r,\phi_r), \phi_r+ \tau^{*\phi}(\theta_r,\phi_r)\big)$ two points on the sphere, and consider the triangle $\Delta u_1u_0u_2$ with $u_0=(0,0)$. Since the difference between two sides is less than the third side, we have $|\overline{u_0u_1} - \overline{u_0u_2}| \le \overline{u_1u_2}$. Therefore, we get
\begin{align} \label{eq:prthm124}
|\text{arc}(u_0,u_1) - \text{arc}(u_0,u_2)| \le \text{arc}(u_1,u_2).
\end{align}
Also since $u_2 = \tau^* (u_1) = \tau^*_{u_1} (u_1)$ where $\tau^*_{u_1}$ is the local rotation of $\tau^*$ at the point $u_1$ and $\text{arc}(u_1,u_2)$ is the shortest distance between $u_1$ and $u_2$, we have $\text{arc}(u_1,u_2) \le \| \tau \|_\SO3$. By using this result in \eqref{eq:prthm124} and the facts that $\text{arc}(u_0,u_1) = \theta_r$ and $\text{arc}(u_0,u_2) = \theta_r + \tau^{*\theta}(\theta_r,\phi_r)$ in the unit sphere, we have
\begin{align} \label{eq:prthm125}
|\theta_r + \tau^{*\theta}(\theta_r,\phi_r) - \theta_r| = |\tau^{*\theta}(\theta_r,\phi_r)| \le \| \tau \|_\SO3.
\end{align}
By substituting \eqref{eq:prthm125} into \eqref{eq:prthm122} and the latter into the second term in \eqref{eq:prthm1213} together with the fact that $|\cos\!\big(\theta_r+ t \tau^{*\theta}(\theta_r,\phi_r)\big)|\le 1$, we get
\begin{align} \label{eq:prthm126}
&\Big|h \left( r^{z-1}_{\rho_r}\big( r_{\tau^* (u_r)}^{-\!1} (u) \big) \right) \!\left[ \sin(\theta) \!-\! \sin\big(\theta_r \!+\! \tau^{*\theta}(\theta_r,\!\phi_r)\big) \right]\!\Big| \le C_h \| \tau \|_\SO3.
\end{align}
Further substituting \eqref{eq:prthm121}, \eqref{eq:prthm1217} and \eqref{eq:prthm126} into $k_{2}(\theta_r, \phi_r, \rho_r, \theta_u, \phi_u)$ yields
\begin{align}\label{eq:prthm127}
&| k_{2}(\theta_r, \phi_r, \rho_r, \theta_u, \phi_u) |\\
&\!\le\! 2 C_h \| \tau \|_\SO3\!\!\left(\! 1 \!+\! 2\| \nabla\tau \|_\SO3 \!+\!\| \nabla\tau \|_{\SO3}^2 \!\right) \nonumber \!\le\! 2 C_h \epsilon \!+\! \ccalO(\epsilon^2)
\end{align}
where $\| \tau \|_\SO3 \le \epsilon$ and $\| \nabla \tau \|_\SO3 \le \epsilon$ are used in the last inequality.

By using \eqref{eq:prthm120} and \eqref{eq:prthm127}, we bound the operator $k(\theta_r, \phi_r, \rho_r, \theta_u, \phi_u)$ as
\begin{align} \label{eq:prthm128}
| k(\theta_r, \phi_r, \rho_r, \theta_u, \phi_u)| \le 4 C_h \epsilon + \ccalO(\epsilon^2).
\end{align}
Therefore, we obtain
\begin{align}
\frac{1}{8\pi^2}\!\!\int\! | k(\theta_r,\! \phi_r,\! \rho_r,\! \theta_u,\! \phi_u) | d\theta_r d\phi_r d\rho_r \!\le\! 4 C_h \epsilon \!+\! \ccalO(\epsilon^2),\\
\frac{1}{2 \pi^2}\! \int\! | k(\theta_r, \phi_r, \rho_r, \theta_u, \phi_u) | d\theta_u d\phi_u \!\le\! 4 C_h \epsilon \!+\! \ccalO(\epsilon^2).
\end{align}
Then by using the Schur's Lemma, we have
\begin{align}\label{eq:prthm129}
&\| H \!-\! H {P^*_\tau}^{-\!1} \| \le 4 C_h \epsilon + \ccalO(\epsilon^2).
\end{align}

Finally, by substituting \eqref{eq:prthm18} and \eqref{eq:prthm129} into \eqref{eq:prthm15} and the latter into \eqref{eq:proofthm13}, we complete the proof
\begin{equation}\label{eq:prthm130}
\| H(x_\tau) \!-\! H(x) \|_{\SO3} \le \| HP^*_\tau \!-\! H \| \|\bbx\|
 \le 8 C_h \epsilon \| \bbx \| \!+\! \ccalO(\epsilon^2).
\end{equation}
\end{proof}

%%%%%%%%%%%%%%%%%%%%%%%%%%%%%%%%%%%%%%%%%Proposition 3%%%%%%%%%%%%%%%%%%%%%%%%%%%%%%%%%
\section{Proof of Proposition~\ref{PropositionEquivarianceSCNN}} \label{proof:propositionEquivarianceSCNN}

\begin{proof}[Proof of Proposition 2]
Without loss of generality, we consider multi-feature Spherical CNN. From Proposition \ref{prop:rotationEquivarianceConv}, we know that the rotation equivariance holds for spherical convolutional filters. Then at layer $\ell$ of the SCNN, each filter $h_\ell^{fg}$ satisfies
\begin{equation} \label{eq:prproposition21}
\begin{split}
H_\ell^{fg}\left(\big(x_{(\ell-1)}^g\big)_r\right) = \left(H_{\ell}^{fg} (x_{\ell-1}^g)\right)_r,~\forall~f=1,\ldots,F_\ell,~ g=1,\ldots,F_{\ell-1},
\end{split}
\end{equation}
where $\big(x_{(\ell-1)}^g\big)_r$, $\big(H_{\ell}^{fg} (x_{\ell-1}^g)\big)_r$ are rotated signals of $x_{\ell\!-\!1}^g$, $H_{\ell}^{fg} (x_{\ell\!-\!1}^g)$ by $r$. Since linear operations does not break the rotation equivariance, we have
\begin{equation} \label{eq:prproposition22}
\begin{split}
\sum_{g=1}^F H_\ell^{fg}\left(\big(x_{(\ell-1)}^g\big)_r\right) \!=\! \Big( \sum_{g=1}^F H_\ell^{fg} (x_{\ell-1}^g) \Big)_r.
\end{split}
\end{equation}
Note that the pointwise nonlinearity $\sigma(\cdot)$ applies to each element of the spherical signal $x$ such that we have $\sigma(x_r) = \big(\sigma(x)\big)_r$. Thus, we can further get
\begin{equation} \label{eq:prproposition23}
\begin{split}
 &\sigma \Big( \sum_{g=1}^F H_\ell^{fg}\left(\big( x_{(\ell-1)}^g \big)_r \right) \Big) \!=\! \Big( \sigma \Big( \sum_{g=1}^F H_\ell^{fg} (x_{\ell-1}^g) \Big)\Big)_r.
\end{split}
\end{equation}
Since \eqref{eq:prproposition23} holds at each layer $\ell=1,\cdots,L$, we conclude that the rotation equivariance holds for the Spherical CNN.
\end{proof}

%%%%%%%%%%%%%%%%%%%%%%%%%%%%%%%%%%%%%%%%%Theorem 2%%%%%%%%%%%%%%%%%%%%%%%%%%%%%%%%%

\section{Proof of Theorem~\ref{thm:stabilitySCNN}} \label{proof:stabilitySCNN}

We need the following lemma that shows the propagation consequence of a spherical signal through the spherical convolutional filter.

\begin{lemma}\label{LemmaFilterBound}
Let $x \in\mathbb{L}^2(\mbS_2) $ be a spherical signal, and $H$ be a Lipschitz spherical convolutional filter with respect to $C_h$ [cf. Def~\ref{def:Lipschitz}]. Then, it holds that
    \begin{gather}
    \| H( x) \| \le C_h \| x \|.
    \end{gather}
\end{lemma}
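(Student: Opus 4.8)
The plan is to realize the single-feature spherical convolution as an integral operator and to bound its operator norm with the very same Schur's Lemma already deployed in the proof of Theorem~\ref{thm:stabilityConv}. First I would rewrite $H(x)$ in the Euler-angle parametrization [cf.\ \eqref{eq:sphericalConvEuler}], using $x(r_{\phi_r\theta_r\rho_r}) = x(\theta_r,\phi_r)$, so that
\[
Hx(u) = \frac{1}{8\pi^2}\int h\big(r^{-1}_{\phi_r\theta_r\rho_r}\circ u\big)\,x(\theta_r,\phi_r)\,\sin(\theta_r)\,d\theta_r\,d\phi_r\,d\rho_r .
\]
This exhibits $H$ as an integral operator whose kernel is $k(\theta_r,\phi_r,\rho_r,\theta_u,\phi_u) = h\big(r^{-1}_{\phi_r\theta_r\rho_r}\circ u\big)\sin(\theta_r)$, which is exactly the kernel format on which the appendix's Schur test \eqref{eq:prthm117}--\eqref{eq:prthm1175} operates.

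The key estimate is a pointwise kernel bound: the first Lipschitz condition $|h(u)|\le C_h$ [cf.\ \eqref{eq:Lipschitz}] together with $|\sin(\theta_r)|\le 1$ gives $|k|\le C_h$ everywhere. I would then verify the two Schur conditions. For the row sum [cf.\ \eqref{eq:prthm117}], the $\sin(\theta_r)$ factor is precisely the normalized Haar weight, so that
\[
\frac{1}{8\pi^2}\int |k|\,d\theta_r\,d\phi_r\,d\rho_r \le \frac{C_h}{8\pi^2}\int_0^{2\pi}\!\!\int_0^{2\pi}\!\!\int_0^{\pi}\sin(\theta_r)\,d\theta_r\,d\phi_r\,d\rho_r = C_h,
\]
since the triple integral equals $8\pi^2$ and the bound is uniform in $u$. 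For the column sum [cf.\ \eqref{eq:prthm1175}], $\sin(\theta_r)$ is constant in $(\theta_u,\phi_u)$ and $\int_0^{2\pi}\!\int_0^{\pi} d\theta_u\,d\phi_u = 2\pi^2$, so $\tfrac{1}{2\pi^2}\int |k|\,d\theta_u\,d\phi_u \le C_h\sin(\theta_r)\le C_h$. Both conditions thus collapse to $A = C_h$, and invoking Schur's Lemma yields $\|H\|\le C_h$, i.e.\ $\|H(x)\|\le C_h\|x\|$, which is the claim. The multi-feature case follows by applying this single-filter bound to each constituent $h^{fg}$ and aggregating with the triangle inequality under the norm of Definition~\ref{def:rotationNorm}.

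I expect the only delicate point to be the bookkeeping of the two distinct normalizations---$1/(8\pi^2)$ on the rotation-group side and $1/(2\pi^2)$ on the spherical-coordinate side---together with the $\sin(\theta_r)$ Jacobian that arises from the Haar measure \eqref{eq:normalizedMeasureRotation}. This Jacobian is exactly what reconciles the two sides, so that both Schur sums reduce to the clean constant $C_h$ rather than to a quantity inflated by the volume of the domain. Since the Schur machinery and these precise normalizations are already in place from the proof of Theorem~\ref{thm:stabilityConv}, no new analytic tools are required; here the test is simply applied to the plain convolution kernel rather than to the difference kernel analyzed there.
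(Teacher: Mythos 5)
Your proposal is correct and follows essentially the same route as the paper's own proof: both realize $H$ as an integral operator with kernel $k(\theta_r,\phi_r,\rho_r,\theta_u,\phi_u)=h\big(r^{-1}_{\phi_r\theta_r\rho_r}\circ u\big)\sin(\theta_r)$, bound $|k|\le C_h$ via the boundedness half of the Lipschitz condition, and apply the Schur test \eqref{eq:prthm117}--\eqref{eq:prthm1175} with both normalized sums collapsing to $C_h$. Your version is in fact slightly more explicit than the paper's, since you verify the $8\pi^2$ and $2\pi^2$ normalization bookkeeping that the paper leaves implicit.
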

\begin{proof}
    The output of the spherical convolutional filter is [cf. \eqref{eq:sphericalConvRectangle}]
\begin{align} \label{eq:prlemma11}
H(x) (u) &=\!\frac{1}{8 \pi^2} \!\!\! \int\!\!
        h\big(r^{-1}_{\phi_r\theta_r\rho_r}(u)\big) x(\theta_r,\phi_r)\!\sin(\theta_r)
         d\theta_r d\phi_r d\rho_r\nonumber \\
& =\! \frac{1}{8 \pi^2} \!\!\! \int\!\!k(\theta_r,\!\phi_r,\!\rho_r,\!\theta_u,\!\phi_u) x(\theta_r,\!\phi_r\!)
         d\theta_r d\phi_r d\rho_r 
\end{align}    
where $r^{-1}_{\phi_r\theta_r\rho_r}$ is the inverse rotation characterized by the Euler angles $(\phi_r,\!\theta_r,\!\rho_r)$ and $k(\theta_r,\!\phi_r,\!\rho_r,\!\theta_u,\!\phi_u)$ is the operation kernel. Since $|h(\theta_r,\phi_r)| \le C_h$, we have
    \begin{align}
    &\frac{1}{8\pi^2}\!\!\int\!\! |k(\theta_r,\!\phi_r,\!\rho_r,\!\theta_u,\!\phi_u)| d\theta_r d\phi_r d\rho_r \!\le\! C_h,~\frac{1}{2\pi^2}\!\!\int\!\! |k(\theta_r,\!\phi_r,\!\rho_r,\!\theta_u,\!\phi_u)| d\theta_u d\phi_u \!\le\! C_h \nonumber
    \end{align}
    By using the Schur's Lemma in \eqref{eq:prthm117} and \eqref{eq:prthm1175}, we complete the proof $\| H( x) \| \le C_h \|x\|$.
\end{proof}

\begin{proof}
The output of the Spherical CNN is $\Phi(x;\ccalH) = \sigma_L \big( \sum_{f=1}^{F} h_{L}^{f} \ast_{\SO3} x_{L-1}^{f} \big)$. Denote by $H_L^f x_{L-1}^f \!=\! h_{L}^{f} \ast_{\SO3} x_{L-1}^{f}$ and $P_\tau x \!=\! x_\tau$ concise notions of the spherical convolution and the perturbed signal. The output difference induced by $\tau$ is
\begin{align} \label{eq:prthm22}
\| \Phi(P_\tau x;\ccalH) - \Phi(x;\ccalH) \| &= \big\| \sigma \big( \sum_{f=1}^{F}\! H_{L}^{f} \hat{x}_{L-1}^{f} \big) - \sigma \big( \sum_{f=1}^{F}\! H_{L}^{f} x_{L-1}^{f} \big)\big\| \nonumber \\
& \le C_\sigma \big\| \sum_{f=1}^{F} H_{L}^{f} \hat{x}_{L-1}^{f} - \sum_{f=1}^{F} H_{L}^{f} x_{L-1}^{f} \big\| 
\end{align}
where $\hat{x}_{L-1}^f$ and $x_{L-1}^f$ are the $f$th outputs of $\Phi(P_\tau x;\ccalH)$ and $\Phi(x;\ccalH)$ at $(L-1)$th layer, respectively. The last inequality is due to the Lipschitz property of the nonlinearity $\sigma(\cdot)$. Since the spherical convolutional filter is linear, we get
\begin{align} \label{eq:prthm23}
\!\!\!\| \Phi(P_\tau x;\!\ccalH) \!-\! \Phi(x;\!\ccalH) \| &\!\le\! \sum_{f=1}^{F}\!\! C_\sigma \big\| H_L^f \!\left(\! \hat{x}_{L\!-\!1}^{f} \!-\!  x_{L\!-\!1}^{f}\!\right)\! \big\| \!\le\! C_\sigma C_h\!\! \sum_{f=1}^{F}\! \| \hat{x}_{L\!-\!1}^{f} \!-\!  x_{L\!-\!1}^{f} \|
\end{align}
where the first inequality is due to the triangle inequality and the second is obtained from Lemma \ref{LemmaFilterBound}. We observe a recursive process in \eqref{eq:prthm23} that the output difference at $L$th layer is upper bounded by that at $(L-1)$th layer. By repeating this process, we have $\| \hat{x}_{L-1}^{f} -  x_{L-1}^{f} \| \le C_\sigma C_h \sum_{g=1}^{F} \| \hat{x}_{L-2}^{g} -  x_{L-2}^{g} \|$. Substituting this result into \eqref{eq:prthm23} yields
\begin{align} \label{eq:prthm25}
&\| \Phi(P_\tau x;\ccalH) \!-\! \Phi(x;\ccalH) \| \!\le\! C_\sigma^2 C_h^2 F \!\sum_{g=1}^{F} \| \hat{x}_{L-2}^{g} \!-\!  x_{L-2}^{g} \|.
\end{align}
By doing this recursively, we have
\begin{align} \label{eq:prthm26}
\| \Phi(P_\tau x;\ccalH) - \Phi(x;\ccalH) \| \le C_\sigma^{L-1}C_h^{L-1} F^{L-2} \sum_{g=1}^{F} \| \hat{x}_{1}^{g} \!-\!  x_{1}^{g} \|
\end{align}
where $\hat{x}_{1}^g$ and $x_{1}^g$ are the $g$th outputs of $\Phi(P_\tau x;\ccalH)$ and $\Phi(x;\ccalH)$ at $1$st layer.

Now consider the term $\| \hat{x}_{1}^{g} -  x_{1}^{g} \|$, which is given by the definition as
\begin{equation} \label{eq:prthm27}
\| \hat{x}_{1}^{g} \!-\!  x_{1}^{g} \| \!= \! \| \sigma \!\left( H_{1}^{g} P_\tau x \right) \!-\! \sigma\! \left( H_{1}^{g} x \right)\| \!\le \! C_\sigma \| H_{1}^{g} P_\tau x - H_{1}^{g} x \|
\end{equation}
where $x$ is the input signal and the last inequality is due to the Lipschitz property of the nonlinearity. From Theorem \ref{thm:stabilityConv}, under the rotation distance we have 
\begin{equation} \label{eq:prthm28}
\begin{split}
&\| H_{1}^{g} P_\tau x - H_{1}^{g} x \|_\SO3 \le 8 C_h \epsilon \| x \|+ \ccalO(\epsilon^{2}).
\end{split}
\end{equation}
Note that \eqref{eq:prthm28} holds for any filters $\{ h_{1}^{g} \}_{g=1}^F$ at $1$st layer. By substituting \eqref{eq:prthm28} into \eqref{eq:prthm27}, we get $\| \hat{x}_{1}^{g} -  x_{1}^{g} \|_\SO3 \le 8 C_h C_\sigma \epsilon \| x \|+ \ccalO(\epsilon^{2})$ for all $g=1,\ldots, F$. By further substituting this result into \eqref{eq:prthm26}, we complete the proof
\begin{equation} \label{eq:prthm210}
\| \Phi(P_\tau x;\ccalH) - \Phi(x;\ccalH) \|_\SO3
\le 8 C_h^{L} C_\sigma^{L} F^{L-1} \epsilon \| x \| + \ccalO(\epsilon^{2}).
\end{equation}
\end{proof}

%\section*{References}

\bibliography{bibFiles/myIEEEabrv,bibFiles/biblioStabilitySO3}

\begin{thebibliography}{10}
\expandafter\ifx\csname url\endcsname\relax
  \def\url#1{\texttt{#1}}\fi
\expandafter\ifx\csname urlprefix\endcsname\relax\def\urlprefix{URL }\fi
\expandafter\ifx\csname href\endcsname\relax
  \def\href#1#2{#2} \def\path#1{#1}\fi

\bibitem{reutebuch2005light}
S.~E. Reutebuch, H.~Andersen, R.~J. McGaughey, Light detection and ranging
  (lidar): an emerging tool for multiple resource inventory, Journal of
  forestry 103~(6) (2005) 286--292.

\bibitem{Chang2017Matterport3D}
A.~{Chang}, A.~{Dai}, T.~{Funkhouser}, M.~{Halber}, M.~{Niebner}, M.~{Savva},
  S.~{Song}, A.~{Zeng}, Y.~{Zhang}, Matterport3d: Learning from rgb-d data in
  indoor environments, in: 5th International Conference on 3D Vision, Qingdao,
  China, 2017, pp. 667--676.

\bibitem{dury2015surface}
M.~R. Dury, S.~D. Woodward, S.~B. Brown, M.~B. McCarthy, Surface finish and 3d
  optical scanner measurement performance for precision engineering, in: 30th
  Annual Meeting of the American Society for Precision Engineering, ASPE,
  Austin, TX, 2015, pp. 1--6.

\bibitem{zhou20043d}
K.~Zhou, H.~Bao, J.~Shi, 3d surface filtering using spherical harmonics,
  Computer-Aided Design 36~(4) (2004) 363--375.

\bibitem{simons2013scalar}
F.~J. Simons, A.~Plattner, \href{https://arxiv.org/abs/1306.3184}{Scalar and
  vector slepian functions, spherical signal estimation and spectral analysis},
  arXiv preprint arXiv:1306.3184 [physics.data-an].
\newline\urlprefix\url{https://arxiv.org/abs/1306.3184}

\bibitem{racah2017extremeweather}
E.~Racah, C.~Beckham, T.~Maharaj, S.~E. Kahou, M.~Prabhat, C.~Pal,
  Extremeweather: A large-scale climate dataset for semi-supervised detection,
  localization, and understanding of extreme weather events, in: 31st
  Conference on Neural Information Processing Systems, Neural Inform. Process.
  Foundation, Long Beach, CA, 2017, pp. 3402--3413.

\bibitem{yavartanoo2018spnet}
M.~Yavartanoo, E.~Y. Kim, K.~M. Lee, Spnet: Deep 3d object classification and
  retrieval using stereographic projection, in: 14th Asian Conference on
  Computer Vision, Springer, Perth, Australia, 2018, pp. 691--706.

\bibitem{geiger2013vision}
A.~Geiger, P.~Lenz, C.~Stiller, R.~Urtasun, Vision meets robotics: The kitti
  dataset, The International Journal of Robotics Research 32~(11) (2013)
  1231--1237.

\bibitem{maier2013optical}
L.~Maier-Hein, P.~Mountney, A.~Bartoli, et~al., Optical techniques for 3d
  surface reconstruction in computer-assisted laparoscopic surgery, Medical
  image analysis 17~(8) (2013) 974--996.

\bibitem{makadia2004rotation}
A.~Makadia, L.~Sorgi, K.~Daniilidis, Rotation estimation from spherical images,
  in: 17th International Conference on Pattern Recognition, IEEE, Cambridge,
  UK, 2004, pp. 590--593.

\bibitem{makadia2006rotation}
A.~Makadia, K.~Daniilidis, Rotation recovery from spherical images without
  correspondences, IEEE transactions on pattern analysis and machine
  intelligence 28~(7) (2006) 1170--1175.

\bibitem{gelfand2018representations}
I.~M. Gelfand, R.~A. Minlos, Z.~Y. Shapiro, Representations of the Rotation and
  Lorentz Groups and their Applications, Courier Dover Publications, 2018.

\bibitem{derighetti2011convolution}
A.~Derighetti, Convolution operators on groups, Springer, 2011.

\bibitem{makadia2010spherical}
A.~Makadia, K.~Daniilidis, Spherical correlation of visual representations for
  3d model retrieval, International Journal of Computer Vision 89~(2-3) (2010)
  193--210.

\bibitem{su2017learning}
Y.~Su, K.~Grauman, Learning spherical convolution for fast features from 360
  imagery, in: 31st Conference on Neural Information Processing Systems, Neural
  Inform. Process. Foundation, Long Beach, CA, 2017, pp. 529--539.

\bibitem{esteves2018learning}
C.~Esteves, C.~Allen-Blanchette, A.~Makadia, K.~Daniilidis, Learning so(3)
  equivariant representations with spherical cnns, in: European Conference on
  Computer Vision, Munich, Germany, 2018, pp. 52--68.

\bibitem{cohen2018spherical}
T.~S. Cohen, M.~Geiger, J.~Khler, M.~Welling, Spherical cnns, in: 6th
  International Conference on Learning Representations, Vancouver, BC, 2018.

\bibitem{worrall2017harmonic}
D.~E. Worrall, S.~J. Garbin, D.~Turmukhambetov, G.~J. Brostow, Harmonic
  networks: Deep translation and rotation equivariance, in: 30th Conference on
  Computer Vision and Pattern Recognition, IEEE, Hawaii, HO, 2017, pp.
  5028--5037.

\bibitem{veeling2018rotation}
B.~S. Veeling, J.~Linmans, J.~Winkens, T.~Cohen, M.~Welling, Rotation
  equivariant cnns for digital pathology, in: 21st International Conference on
  Medical image computing and computer-assisted intervention, Granada, Spain,
  2018, pp. 210--218.

\bibitem{li2019discrete}
J.~Li, Y.~Bi, G.~H. Lee, Discrete rotation equivariance for point cloud
  recognition, in: International Conference on Robotics and Automation, IEEE,
  Montreal, Canada, 2019, pp. 7269--7275.

\bibitem{khasanova2017graph}
R.~Khasanova, P.~Frossard, Graph-based isometry invariant representation
  learning, in: International Conference on Machine Learning, PMLR, Sydney,
  Australia, 2017, pp. 1847--1856.

\bibitem{frossard2017graph}
P.~Frossard, R.~Khasanova, Graph-based classification of omnidirectional
  images, in: International Conference on Computer Vision Workshops, IEEE,
  Venice, Italy, 2017, pp. 860--869.

\bibitem{perraudin2019deepsphere}
N.~Perraudin, M.~Defferrard, T.~Kacprzak, R.~Sgier, Deepsphere: Efficient
  spherical convolutional neural network with healpix sampling for cosmological
  applications, Astronomy and Computing 27 (2019) 130--146.

\bibitem{defferrard2020deepsphere}
M.~Defferrard, M.~Milani, F.~Gusset, N.~Perraudin,
  \href{https://arxiv.org/abs/2012.15000}{Deepsphere: a graph-based spherical
  cnn}, arXiv preprint arXiv:2012.15000 [cs.LG].
\newline\urlprefix\url{https://arxiv.org/abs/2012.15000}

\bibitem{wu20153d}
Z.~Wu, S.~Song, A.~Khosla, F.~Yu, L.~Zhang, X.~Tang, J.~Xiao, 3d shapenets: A
  deep representation for volumetric shapes, in: 28th Conference on Computer
  Vision and Pattern Recognition, IEEE, Boston, MA, 2015, pp. 1912--1920.

\bibitem{mebius2007derivation}
J.~E. Mebius, \href{https://arxiv.org/abs/math/0701759}{Derivation of the
  euler-rodrigues formula for three-dimensional rotations from the general
  formula for four-dimensional rotations}, arXiv preprint arXiv:math/0701759
  [math.GM].
\newline\urlprefix\url{https://arxiv.org/abs/math/0701759}

\bibitem{scott2012group}
W.~R. Scott, Group theory, Courier Corporation, 2012.

\bibitem{kostelec2008ffts}
P.~J. Kostelec, D.~N. Rockmore, Ffts on the rotation group, Journal of Fourier
  analysis and applications 14~(2) (2008) 145--179.

\bibitem{driscoll1994computing}
J.~R. Driscoll, D.~M. Healy, Computing fourier transforms and convolutions on
  the 2-sphere, Advances in applied mathematics 15~(2) (1994) 202--250.

\bibitem{Mallat12-Scattering}
S.~Mallat, Group invariant scattering, Communications on Pure and Applied
  Mathematics. 65~(10) (2012) 1331--1398.

\bibitem{Goodfellow16-DeepLearning}
I.~Goodfellow, Y.~Bengio, A.~Courville, Deep Learning, Adaptive Computation
  Mach. Learning Series, The {MIT} Press, Cambridge, MA, 2016.

\bibitem{krizhevsky2012imagenet}
A.~Krizhevsky, I.~Sutskever, G.~E. Hinton, Imagenet classification with deep
  convolutional neural networks, in: 26th Conference on Neural Information
  Processing Systems, Neural Inform. Process. Foundation, Harrahs and Harveys,
  Lake Tahoe, 2012, pp. 1097--1105.

\bibitem{hu2014convolutional}
B.~Hu, Z.~Lu, H.~Li, Q.~Chen, Convolutional neural network architectures for
  matching natural language sentences, in: 28th Conference on Neural
  Information Processing Systems, Neural Inform. Process. Foundation, Montreal,
  CANADA, 2014, pp. 2042--2050.

\bibitem{li2015convolutional}
H.~Li, Z.~Lin, X.~Shen, J.~Brandt, G.~Hua, A convolutional neural network
  cascade for face detection, in: 28th Conference on Computer Vision and
  Pattern Recognition, IEEE, Boston, MA, 2015, pp. 5325--5334.

\bibitem{Bruna14-DeepSpectralNetworks}
J.~Bruna, W.~Zaremba, A.~Szlam, Y.~LeCun, Spectral networks and deep locally
  connected networks on graphs, in: 2nd International Conference on Learning
  Representations, Banff, AB, 2014, pp. 1--14.

\bibitem{Defferrard17-ChebNets}
M.~Defferrard, X.~Bresson, P.~Vandergheynst, Convolutional neural networks on
  graphs with fast localized spectral filtering, in: 30th Conference on Neural
  Information Processing Systems, Neural Inform. Process. Foundation,
  Barcelona, Spain, 2016, pp. 3844--3858.

\bibitem{Gama19-Architectures}
F.~Gama, A.~G.~Marques, G.~Leus, A.~Ribeiro, Convolutional neural network
  architectures for signals supported on graphs, IEEE Transactions on Signal
  Processing 67~(4) (2019) 1034--1049.

\bibitem{Gama20-Stability}
F.~Gama, J.~Bruna, A.~Ribeiro, Stability properties of graph neural networks,
  {IEEE} Transactions on Signal Processing 68 (2020) 5680--5695.

\bibitem{chen2011geometric}
K.~Chen, L.~Liu, Geometric data perturbation for privacy preserving outsourced
  data mining, Knowledge and information systems 29~(3) (2011) 657--695.

\bibitem{huang2016damped}
W.~Huang, R.~Wang, Y.~Chen, H.~Li, S.~Gan, Damped multichannel singular
  spectrum analysis for 3d random noise attenuation, Geophysics 81~(4) (2016)
  V261--V270.

\bibitem{nazari2017data}
M.~A. Nazari~S., S.~Gholtashi, A.~R. Kahoo, W.~Chen, Y.~Chen, Data-driven
  multitask sparse dictionary learning for noise attenuation of 3d seismic
  data, Geophysics 82~(6) (2017) V385--V396.

\bibitem{schur1911bemerkungen}
J.~Schur, Bemerkungen zur theorie der beschr{\"a}nkten bilinearformen mit
  unendlich vielen ver{\"a}nderlichen., Journal f{\"u}r die reine und
  angewandte Mathematik (Crelles Journal) 1911~(140) (1911) 1--28.

\end{thebibliography}

\end{document}